%% file: paper.tex
\documentclass{article}

\usepackage[preprint]{neurips_2026}

\usepackage[utf8]{inputenc} 
\usepackage[T1]{fontenc}    
\usepackage{hyperref}       
\usepackage{url}            
\usepackage{booktabs}       
\usepackage{amsfonts}       
\usepackage{nicefrac}       
\usepackage{microtype}      
\usepackage{graphicx}
\usepackage{amsmath}
\usepackage{algorithm}
\usepackage{amssymb}
\usepackage{algorithmic}
\usepackage{wrapfig}
\usepackage{booktabs}
\usepackage{rotating}
\usepackage{xcolor}
\usepackage{soul}
\usepackage{multirow}
\usepackage{mathtools}
\usepackage{amsthm}
\newtheorem{assumption}{Assumption}
\newtheorem{theorem}{Theorem}
\newtheorem{lemma}{Lemma}

\newtheorem{definition}{Definition}
\newtheorem{remark}{Remark}

\newcommand{\eg}{{\it e.g.}, }

\title{ART: Attention Run-time Termination for Efficient Large Language Model Decoding}

\author{
Chen Qiu$^{1}$ \quad Guozhong Li$^{1}$ \quad Cristian McGee$^{2}$ \quad
Aritra Dutta$^{2}$ \quad Panos Kalnis$^{1}$ \\
$^{1}$King Abdullah University of Science and Technology \\
$^{2}$University of Central Florida \\
\texttt{\{chen.qiu, guozhong.li, panos.kalnis\}@kaust.edu.sa} \\
\texttt{\{cristian.mcgee, aritra.dutta\}@ucf.edu}
}

\begin{document}

\maketitle
\input{sections/Abstract}
\input{sections/Introduction}
\input{sections/Related_Work}
\input{sections/Methodology}
\input{sections/Experiments}

\input{sections/Conclusion}

\subsubsection*{Acknowledgments}
For compute time, this research used IBEX and Shaheen III, managed by the Supercomputing Core Laboratory at King Abdullah University of Science and Technology (KAUST), Saudi Arabia.
Aritra Dutta is partially supported by the Florida Department of Health Grant, AWD00007072, and the National Science Foundation Grant, 2321986.

\bibliography{paper}
\bibliographystyle{plainnat}
\input{sections/appendix}
\end{document}

%% file: sections/Abstract.tex
\begin{abstract}
Long-context decoding in Large Language Models (LLMs) is constrained by the cost of accessing and processing the  Key-Value (KV) cache.
Despite evidence that attention outputs depend jointly on keys and values, most existing KV management methods rely on key-only pruning, since incorporating values incurs prohibitive overhead.
In this paper, we propose \textbf{Attention Run-time Termination (ART)}, 
a lightweight run-time mechanism that tracks accumulated attention outputs during kernel execution 
and terminates subsequent KV block accesses once further contributions become negligible. 
Rather than replacing KV selection, ART dynamically terminates redundant KV traversal on top of existing dense or sparse attention policies. We introduce a stability-based criterion that monitors both magnitude and directional changes of intermediate attention outputs and provides a theoretical characterization of the resulting truncation error.
Experiments on the LongBench and RULER Needle-in-a-Haystack tasks show that ART increases the generation throughput of existing KV-cache methods by up to 20\% without compromising the result quality. 
\end{abstract}

%% file: sections/Introduction.tex
\section{Introduction}
\label{sec:Introduction}

\begin{wrapfigure}{r}{0.48\linewidth}
  \centering
    \vspace{-3em}
  \includegraphics[width=\linewidth]{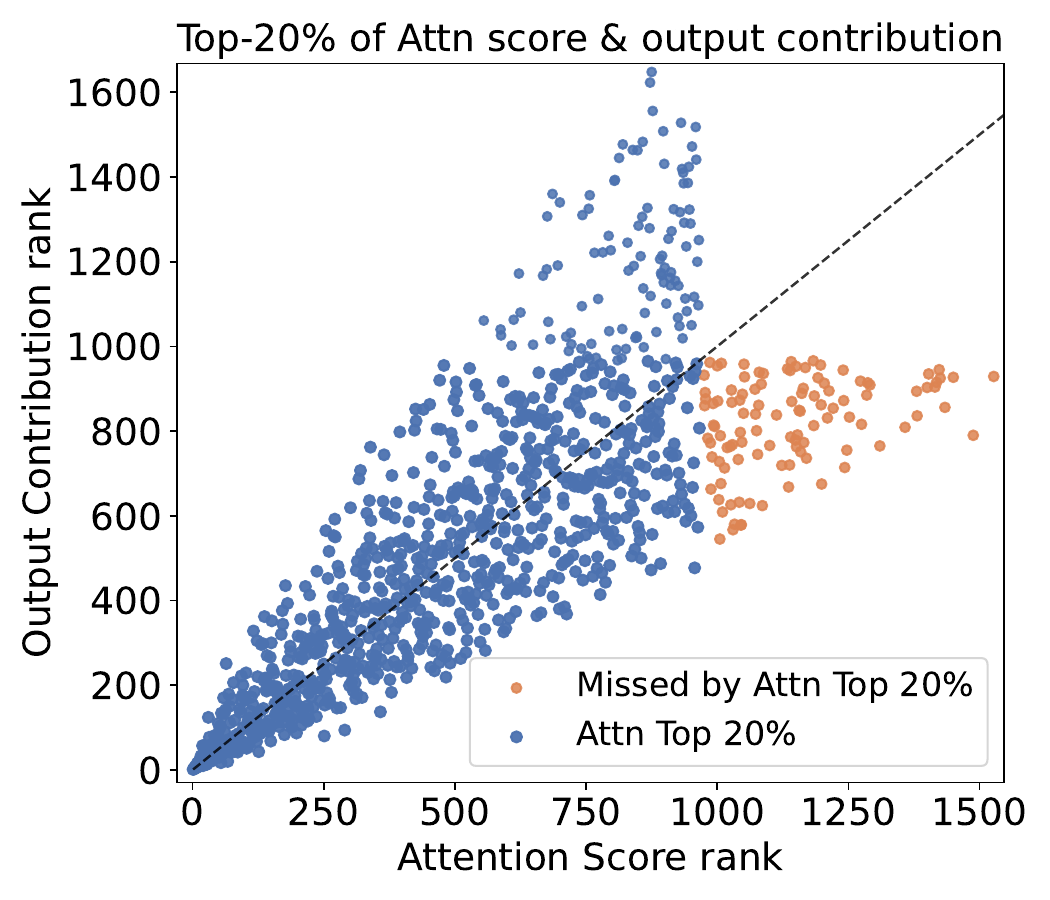}
  \vspace{-2em}
  \caption{Attention score versus output contribution. Orange points denote
high-contribution tokens missed by attention-score ranking. 
  }
  \label{fig:contr_vs_attn}
  \vspace{-1em}
\end{wrapfigure}

Large language models (LLMs)~\citep{touvron_llama_2023, yang_qwen3_2025} rely on a growing key--value (KV) cache to store the intermediate representations of previously generated tokens during autoregressive decoding. 
As the sequence length increases, each new query Q attends to an expanding set of cached keys K and values V, causing both latency and memory usage to grow linearly. Efficiently managing the KV cache, often referred to as KV pruning, has therefore become a central challenge in accelerating long-context inference.

Most existing approaches adopt key-centric KV cache pruning, retaining only tokens selected by heuristic or learned importance estimators.
Representative methods~\citep{xiao_efficient_2024,zhang_h_2o_2023,tang_quest_2024,liu_chunkkv_2025,liu2023scissorhands} typically base these estimators on query--key similarity or attention scores.
However, these proxies do not always reflect a token's true influence on the model output.
As illustrated in Figure~\ref{fig:contr_vs_attn}, this misalignment reveals a fundamental limitation of attention-based pruning: tokens with low attention weights can still have substantial influence through their value representations $\|A_{i,j} V_{j}\|_2$ .

Several recent studies~\citep{gu_obcache_2025,guo_attention_2024,akhauri_tokenbutler_2025} have attempted to incorporate value information into KV cache management, demonstrating the potential benefits of value-aware modeling.
However, these approaches typically rely on additional predictors, pre-computation, or offline analysis to estimate value contributions, introducing non-trivial overhead during inference.
This raises a natural question:
    \textbf{Can we capture the joint effect of keys and values at run-time with negligible additional cost?}


\begin{wrapfigure}{r}{0.6\linewidth}
  \centering
  \includegraphics[width=
  \linewidth]{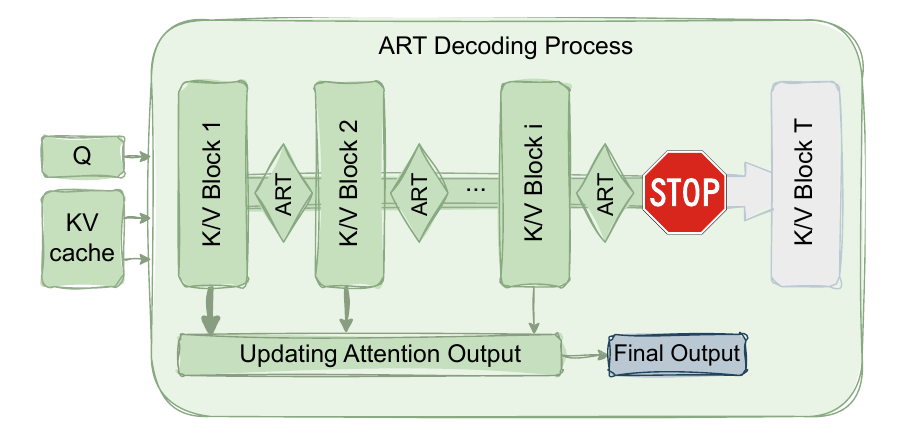}
  \vspace{-1em}
  \caption{Overview of the ART.
  As KV blocks are processed sequentially, ART monitors the intermediate attention output and triggers an early termination once the output stabilizes, skipping the remaining KV blocks during decoding.}
  \label{fig:ART}
\end{wrapfigure} 

Our key observation is that this joint effect is already reflected in the attention output itself. Rather
than estimating token importance before decoding, we can monitor how the accumulated attention
output evolves as additional KV blocks are processed. Modern FlashAttention-style kernels~\cite{dao_flashattention_2022,dao_flashattention-2_2023} naturally expose a sequence of intermediate attention outputs during block-wise execution.
As KV blocks are progressively loaded and processed (Figure~\ref{fig:ART}), each step incrementally updates the attention output, providing a direct signal of how much additional information the newly processed blocks contribute.
When these incremental updates become sufficiently small, even though not all KV blocks have been traversed, further computation and memory accesses would be unnecessary.
This enables progressive evaluation of the joint influence of keys and values at run-time, without relying on expensive pre-decoding estimation.

Building on this insight, we propose \textbf{Attention Run-time Termination (ART)}, a lightweight output-aware mechanism for terminating attention computation at run time. ART operates directly within block-wise attention kernels, incrementally monitoring the evolution of partial attention outputs as KV blocks are processed. Rather than replacing KV selection, ART dynamically terminates redundant KV traversal on top of existing dense or sparse attention policies. This makes ART a composable acceleration layer: a base KV policy determines which blocks are eligible to be processed, while ART decides at run time whether further traversal remains necessary.
Therefore, ART is orthogonal to most existing KV cache management methods and can be seamlessly integrated with them to refine cache utilization during decoding.

We evaluate ART on long-context benchmarks, including LongBench~\citep{bai2024longbench} and the Needle-in-a-Haystack (NIAH) tasks from RULER~\citep{hsieh2024ruler}. The results show that ART reduces decoding time per output token (TPOT) across all LongBench settings, with average reductions of 10.5\% on Mistral-7B and 9.7\% on Llama-3.1-70B, and improves large-batch generation throughput by up to 20\% when composed with existing KV cache management methods. On retrieval-sensitive NIAH tasks, ART is most effective when paired with retrieval-aware KV policies, highlighting its role as a dynamic termination layer rather than a replacement for KV selection.

Our contributions are summarized as follows:
\begin{itemize}
\item We propose ART, an output-aware run-time termination mechanism that acts as a composable acceleration layer over dense or sparse KV traversal policies.

\item We design a stability-based termination criterion that quantifies convergence in the output space by jointly monitoring scale and directional changes of intermediate attention outputs, thereby capturing the combined influence of keys and values.

\item We provide a theoretical characterization of ART's approximation behavior, deriving a truncation-error bound that relates early termination to output stability and residual attention contributions.

\item We demonstrate that ART improves generation throughput by up to 20\% at large
batch sizes and analyze its traversal-dependent speed--retrieval trade-off on
RULER NIAH.
\end{itemize}

%% file: sections/Related_Work.tex
\section{Related Work}
\label{sec:related}

In this section, we review prior work on efficient long-context inference.
We summarize KV cache management methods, and discuss efficient attention kernels and inference systems.



\paragraph{KV cache management.}
\label{sec:related:KV Cache Management}
Early approaches to efficient long-context inference adopt pattern-based pruning strategies, such as LM-Infinite~\citep{hanLMInfiniteZeroShotExtreme2024} and Attention Sink~\citep{xiao_efficient_2024}, which maintain a fixed-size active context by discarding older tokens or concentrating attention on a small set of anchor positions.
To move beyond static heuristics, subsequent methods estimate token importance using semantic, hierarchical, or dynamic criteria.
ChunkKV~\citep{liu_chunkkv_2025}, SnapKV~\citep{li_snapkv_2024}, Quest~\citep{tang_quest_2024}, and PyramidKV~\citep{cai_pyramidkv_2024} preserve coarse-grained global information while filtering redundant tokens through structured grouping.
H2O~\citep{zhang_h_2o_2023}, TOVA~\citep{orenTransformersAreMultiState2024}, and Scissorhands~\citep{liu2023scissorhands} further exploit the temporal persistence of importance scores to adaptively manage the KV cache in decoding steps.

Despite their effectiveness, these methods are fundamentally \emph{key-centric}, implicitly assuming that the attention matrix alone determines the output and overlooking the role of value (V) magnitudes.
Recent work~\cite{guo_attention_2024} challenges this assumption by showing that values (V) also encode critical semantic signals and significantly influence attention outcomes.
However, current value-aware approaches~\cite{gu_obcache_2025,akhauri_tokenbutler_2025} typically rely on auxiliary predictors or offline analysis, incurring substantial overhead and limiting their practicality for online serving.
This leaves an open gap to capture the joint influence of keys and values at \emph{run time} without additional latency.

\paragraph{Efficient attention kernels and inference systems.}
\label{sec:related:Efficient Attention Kernels & Inference Systems}
At the kernel level, FlashAttention~\cite{dao_flashattention_2022} and FlashAttention-2~\cite{dao_flashattention-2_2023} significantly improve attention efficiency through IO-aware tiling and online softmax.
By computing attention in blocks, these kernels reduce memory traffic between HBM and SRAM while incrementally accumulating attention outputs.
This block-wise execution model forms a key foundation for our method, as it exposes intermediate accumulation states that can be monitored at run time without modifying the attention formulation.

Building on such optimized kernels, serving frameworks like vLLM~\cite{kwon_efficient_2023} and SGLang~\cite{zheng_sglang_2024} further improve end-to-end efficiency via PagedAttention and advanced scheduling, addressing memory fragmentation and improving serving throughput.
However, these systems treat attention kernels as atomic operators.
Our work complements existing frameworks by opening this atomic operator and introducing a lightweight run-time early-termination mechanism directly within kernel execution, reducing unnecessary KV accesses at the source.


%% file: sections/Methodology.tex
\section{Methodology: ART}\label{sec:method}

In this section, we present \textbf{Attention Run-time Termination (ART)}.
We first examine execution properties of modern attention kernels, then introduce a stability-based run-time termination mechanism, and finally discuss ART's integration and correctness.


\begin{figure*}[t]
  \centering
  \includegraphics[width=1.05\textwidth]{pdfs/fa.pdf} 
  \vspace{-2em}
  \caption{ART integrated into the FlashAttention execution pipeline.
FlashAttention overlaps DMA-based KV block prefetch from HBM with Tensor Core computation in a block-wise manner.
ART performs a lightweight run-time check on the evolving attention output during computation.
Once convergence is detected (\eg at Step~$i$), ART terminates the pipeline early by preventing further KV block prefetch and computation, reducing memory traffic and computation, with negligible impact on downstream generation quality.}\label{fig:ART with FA}
\vspace{-1em}
\end{figure*}

\subsection{Execution Properties of FlashAttention}
\label{sec:method:execution}
Although attention outputs are defined by a global softmax over all keys, modern FlashAttention-style kernels offer a crucial execution property: attention is computed in a streaming, tile-wise manner, where the output is incrementally accumulated while computation and memory transfers are overlapped (Figure~\ref{fig:ART with FA}).

An important implication is that the attention output may become sufficiently stable before all KV blocks are traversed, suggesting opportunities for early termination without materially affecting the final result.


To formalize this behavior, we consider the scaled dot-product attention for a query block
$Q \in \mathbb{R}^{N \times d}$ attending to keys
$K \in \mathbb{R}^{N \times d}$ and values
$V \in \mathbb{R}^{N \times d}$:
\begin{equation}\label{eq:attn}
O = \sigma\left(\frac{QK^\top}{\sqrt{d}}\right)V,
\end{equation}
where $\sigma$ is the softmax activation function acting row-wise; see definition in Appendix~\ref{appendix:theoretical}. FlashAttention computes Equation~\eqref{eq:attn} by partitioning keys and values into blocks $\{(K_t, V_t)\}_{t=1}^{T}$ and processing them sequentially.
Using a numerically stable streaming softmax formulation, the kernel maintains an internal accumulator that is updated after each block.
We denote by $O^{(t)}$ the attention output after processing the first $t$ blocks.
\begin{wrapfigure}{r}{0.43\linewidth}
  \centering
  \includegraphics[width=\linewidth]{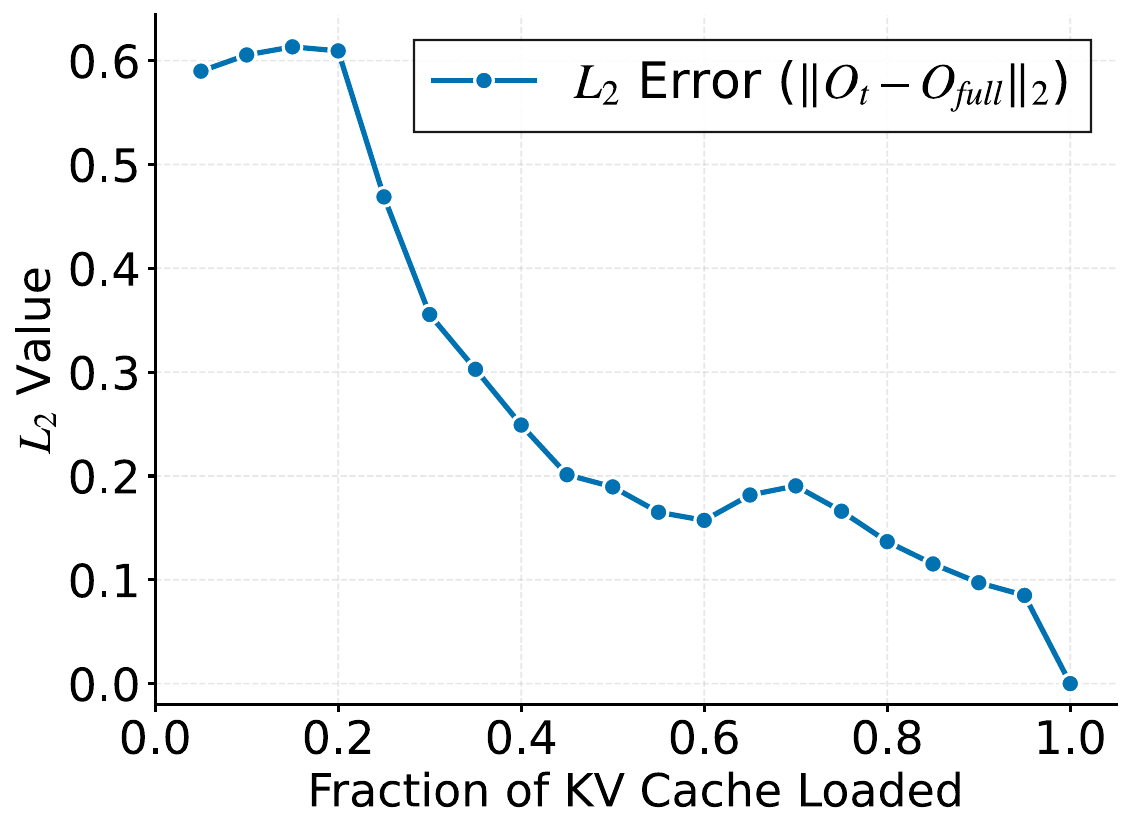} 
  \vspace{-1em}
  \caption{Convergence of attention output with recent KV retention. 
  The plot illustrates the relative $L_2$ error between the attention output from the full KV cache and that from a truncated recent-KV window, as a function of the loaded fraction.}
  \vspace{-4em}
  \label{fig:diff}
\end{wrapfigure}

As $t$ increases, $O^{(t)}$ converges to the final output $O^{(T)}$. 
As shown in Figure~\ref{fig:diff}, the attention output converges rapidly as KV blocks are processed.
Only a subset of blocks contributes substantial error reduction, while later blocks induce marginal changes.
This behavior suggests that attention outputs may stabilize well before all KV blocks are traversed.

\subsection{Stability-Based Run-time Termination}\label{sec:method:art}
Given the early stabilization behavior, a natural question is how to determine, at run time, whether the remaining KV blocks will meaningfully affect the final attention output.
A straightforward approach based on attention scores is insufficient, as high attention weights may be assigned to less informative or even zero-valued value vectors (Figure~\ref{fig:contr_vs_attn}).
This limitation suggests that key-based signals alone cannot reliably indicate when further attention computation becomes redundant.

Motivated by this insight, our idea is to track the evolution of the intermediate attention output.
Specifically, rather than estimating importance by attention score, we monitor how the partial attention output $O^{(t)}$ changes as additional KV blocks are incorporated.
If the output becomes sufficiently stable under a suitable traversal order, further traversal over KV blocks is less likely to alter the final result and can be considered for early termination.
By measuring stability directly in the output space, ART naturally captures the joint contribution of keys and values.


Directly computing full-vector norms or performing cross-thread reductions on the global accumulator would introduce non-trivial overhead and undermine the benefits of early termination.
This calls for a lightweight mechanism that can approximate output convergence without full-vector computation.


In optimized attention kernels utilizing NVIDIA Tensor Cores (\eg FlashAttention-2 \cite{dao_flashattention-2_2023}), the accumulator $O^{(t)}$ is distributed across threads via the swizzled Matrix Multiply-Accumulate (MMA) layout.
We construct a probe vector $x^{(t)} \in \mathbb{R}^{m}$ by collecting from all 32 lanes of warp 0, the leading element of their respective register fragments. Due to the interleaved nature of the MMA layout, these 32 elements constitute a deterministic, scattered subsample of the full head dimension rather than a contiguous slice, providing dispersed coverage across dimensions at negligible overhead.
We therefore monitor the stability of $x^{(t)}$ as a computationally efficient proxy signal for convergence. To improve decision reliability, the probe is integrated with the patience-based verification protocol (defined below) that dampens transient fluctuations before early termination is triggered. The effectiveness of the probe is studied in Appendix~\ref{app:probe_consistency}.

The iterative update of the attention accumulator induces a trajectory in a high-dimensional vector space. 
As successive KV blocks are aggregated, the output state $O^{(t)}$ evolves from its previous state $O^{(t-1)}$ in two geometric dimensions:
\begin{itemize}
    \item \textbf{Magnitude.} The vector may grow or shrink in magnitude due to the accumulation.
    \item \textbf{Direction.} The vector may rotate, shifting its orientation to align with newly discovered semantic information in the current block.
\end{itemize}

To characterize convergence in both aspects, we decouple the stability evaluation into scale and direction components:
\begin{align}
d_{\mathrm{scale}}^{(t)} = \|x^{(t)} - x ^{(t-1)}\|_2,
\quad 
d_{\mathrm{direction}}^{(t)} = 1-\cos (x^{(t)}, x^{(t-1)}),
\label{eq:direction}
\end{align}

An attention update is considered \emph{stable} when both the magnitude change and the directional deviation fall below predefined thresholds: $d_{\mathrm{scale}}^{(t)} < \tau$ \text{and} \quad
$d_{\mathrm{direction}}^{(t)} < \phi.$

The necessity of this decoupled design arises from the practical limitations of using a single unified metric. 
Although the $\ell_2$-norm difference captures both magnitude and directional changes, no fixed threshold can reliably balance these two aspects in practice.
Thresholds that are sensitive enough to detect directional misalignment tend to be overly restrictive to benign scale variations, whereas relaxed thresholds that tolerate scale changes often fail to capture meaningful directional shifts.
Decoupling scale and direction therefore yields a more robust and interpretable criterion for detecting output convergence.

\begin{algorithm}[t]
  \caption{ART run-time termination for FlashAttention}
  \label{alg:ART}
  \begin{algorithmic}[1]
    \REQUIRE Stability thresholds $\tau$ (scale) and $\phi$ (direction), patience $p$
    \STATE $O^{(0)} \gets \mathbf{0}$ \hfill \COMMENT {Initialize attention accumulator}
    \STATE $c^{(0)} \gets 0$ \hfill \COMMENT{Consecutive stable steps counter}
    \FOR{$t = 1$ to $T$}
        \STATE Load next KV tile $(K_t, V_t)$
        \STATE Update streaming attention output $O^{(t)}$
        \STATE $x^{(t)} \gets \textsc{Probe}(O^{(t)})$
    
        \STATE Compute scale $d_{\text{scale}}^{(t)} \gets \|x^{(t)} - x^{(t-1)}\|_2$
        \STATE Compute direction $d_{\mathrm{direction}}^{(t)} \gets 1 - \cos(x^{(t)}, x^{(t-1)})$ 
        \IF{ $d_{\mathrm{scale}}^{(t)} < \tau$ {\bfseries and} $d_{\mathrm{direction}}^{(t)} < \phi$ }
            \STATE $c^{(t)} \leftarrow c^{(t-1)} + 1$
        \ELSE
            \STATE $c^{(t)} \leftarrow 0$
        \ENDIF
        \IF{$c^{(t)} \ge p$}
            \STATE {\bfseries break} \hfill \COMMENT{Termination triggered}
        \ENDIF
    \ENDFOR
    \STATE Finalize normalization and {\bfseries return} $O^{(t)}$
  \end{algorithmic}
\end{algorithm}


While the above criterion captures convergence at a single step, transient fluctuations across neighboring KV blocks may still lead to premature termination.
To mitigate the risk of premature termination caused by transient fluctuations, we incorporate a patience mechanism. 
Let $c^{(t)}$ denote the number of consecutive stable steps up to tile $t$:
\begin{equation}
c^{(t)} =
\begin{cases}
c^{(t-1)} + 1, & \text{if\;\;} d_{\mathrm{scale}}^{(t)} < \tau \;\;\text{and} \quad
d_{\mathrm{direction}}^{(t)} < \phi, \\
0, & \text{otherwise}.
\end{cases}
\end{equation}

Early termination is triggered once $c^{(t)} \ge p$, where $p$ is the patience parameter.
Under this protocol, ART is activated only when the stability criteria are satisfied across $p$ consecutive KV blocks, thereby filtering out transient fluctuations.
As shown in Section~\ref{sec:Ablation_study}, this mechanism substantially reduces false positives in termination decisions while incurring negligible additional latency.
Algorithm~\ref{alg:ART} summarizes the complete run-time termination procedure.

\input{sections/Theory}

\subsection{Composable Termination over KV Traversal}
ART is designed as a composable run-time termination layer over KV traversal, rather than a standalone replacement for KV selection. Existing KV cache management methods determine which tokens or blocks are retained before decoding. ART takes this retained set as input, optionally reorders its traversal, and decides at run time whether the remaining blocks still need to be processed. This separation is important: \emph{the base policy defines the candidate evidence set, while ART reduces redundant traversal after the accumulated attention output has stabilized.} 

The effectiveness of ART depends on the traversal order. If high-contribution blocks are visited early, output stabilization provides a meaningful signal for terminating the remaining traversal. This is also the condition captured by our theoretical analysis: the truncation bound applies under a mass-regulated traversal, where the normalized contribution of the residual blocks decays after the final stable window begins. 
\begin{itemize}
    \item For full KV caching, the selected set contains all KV blocks. ART traverses
blocks in reverse temporal order. This
ordering is motivated by the recency bias of LLMs, which makes recently generated
tokens more likely to dominate the current attention output.

    \item  For pattern-based methods, such as StreamingLLM~\citep{xiao_efficient_2024}, ART applies the same
recency-first traversal over the retained KV blocks while explicitly preserving
the sink blocks required by the base method. 

    \item For methods that assign explicit
importance scores to retained KV blocks, such as SnapKV~\citep{li_snapkv_2024},
ART utilizes an importance-first traversal by processing the selected blocks in
descending order of their assigned priority. This is implemented through an
auxiliary index mapping for indirect addressing and does not alter the base
method's cache selection logic.
\end{itemize}

The traversal-based integration renders ART orthogonal to existing KV pruning methods.
The base method determines the candidate KV blocks, while ART dynamically
determines how many of them need to be visited for the current query. If all
selected blocks are processed, traversal reordering preserves the attention
formulation up to standard floating-point non-associativity. When ART terminates
early, it provides an additional dynamic truncation step on top of the base
method, reducing unnecessary KV block accesses while retaining the original
selection policy.

\noindent
\textbf{Discussion.}
Taken together, these design choices fundamentally distinguish ART from prior sparsity-based techniques.
While existing methods such as SnapKV~\cite{li_snapkv_2024} and H2O~\cite{zhang_h_2o_2023} rely solely on key-based importance estimates, ART introduces an \emph{ex-post}, output-aware convergence check.
By monitoring both the magnitude and direction of attention output updates, explicitly incorporating value vectors, ART continues computation only when values meaningfully alter the representation.
As a result, ART functions not as a standalone pruning heuristic, but as a
value-informed dynamic termination mechanism that complements existing KV
strategies. It improves the efficiency--accuracy trade-off when the underlying
traversal policy exposes relevant evidence before termination.


%% file: sections/Theory.tex
\paragraph{Conditional truncation bound.}

When ART terminates at block $t^*<T$, it returns the intermediate output
$o^{(t^*)}$ instead of the full-attention output $o^{(T)}$. We characterize
this approximation under a mass-regulated traversal:
After the final stable window begins, the remaining blocks have a decaying
normalized attention contribution along the chosen traversal order. Under this
condition and a non-degenerate probe, if ART terminates after $p$ consecutive
stable steps with scale tolerance $\tau$, then:
\begin{equation}
\label{eq:main}
\|o^{(T)}-o^{(t^*)}\|\le\left(\frac{\tau b+\Delta\mu}{\nu}\right)\ln\left(\frac{T}{t^*}\right),
\end{equation}
where $b=t^*-p+1$ is the beginning of the patience window and $\nu$ measures how well the probe preserves the relevant output-update directions. This result \emph{does
not guarantee task-level retrieval correctness for arbitrary traversal orders}; it formalizes the approximation error once the traversal has exposed the
dominant contributors and the residual mass is decaying (see Appendix~\ref{appendix:theoretical} for details).


%% file: sections/Experiments.tex
\section{Experimental Evaluation}
Our experiments show that ART improves decoding efficiency on LongBench and decreases time per output token (TPOT) when composed with existing KV cache management methods. We further use RULER NIAH to analyze its retrieval-sensitive failure mode under naive recency-first traversal.

\subsection{Experimental Setup}
\label{sec:Experimental setup}
Our experiments are conducted with Mistral-7B-Instruct-v0.3~\cite{jiangMistral7B2023}, Llama-3.1-70B-Instruct~\cite{meta2024llama31} and Qwen3-8B~\cite{yang_qwen3_2025} to evaluate long-context reasoning and generation performance. 
Experiments in this section are conducted on 1 or 4 NVIDIA A100-SXM4 GPUs with 80GB memory and NVIDIA H200 SXM GPU with 141GB memory.

\noindent
{\bf Benchmarked methods.}
We compare full KV (Baseline) with three representative KV-cache optimization methods: StreamingLLM~\cite{xiao_efficient_2024}, SnapKV~\cite{li_snapkv_2024}, PyramidKV~\cite{cai_pyramidkv_2024}. 
All methods use identical decoding configurations and random seed to ensure fair comparison; the detailed configurations for all comparative methods are listed in Appendix~\ref{sec:Configuration Detail}. We combine each of the above methods with ART (shown as \textbf{+ART}), using default parameters: $\tau = 10^{-5}$, $\phi = 10^{-3}$, and $p = 5$. Refer to Appendix~\ref{sec:Parameter Sensitivity} for a sensitivity analysis of the parameters. 

\noindent
\textbf{Metrics and measurement.}
\label{sec:Metrics and Measurement}
We focus on the decoding stage, where the KV cache is the main contributor to both latency and memory traffic. To assess generation quality, we report the LongBench score~\citep{bai2024longbench}. We further report accuracy on the Needle-in-a-Haystack (NIAH) tasks from RULER~\citep{hsieh2024ruler} to evaluate robustness in long-range retrieval scenarios, where premature termination could skip distant critical information.
For latency, we report Time Per Output Token (TPOT) as the primary metric.
To evaluate scalability under different batching configurations, we additionally report generation throughput, measured as output tokens per second.
Finally, to isolate the efficiency gains of ART from system-level overheads, we measure the runtime of the FlashAttention (FA) kernel itself, defined as the GPU execution time of FA kernel invocations recorded via CUDA events and averaged over decoding steps; refer to Appendix~\ref{sec:computational_overhead}.

\subsection{Overall Performance on LongBench}
\label{sec:Overall Performance on LongBench}

\begin{table*}[t]
\centering
\caption{Average LongBench score and decoding TPOT speedup (TPOT$\times$) on
Mistral-7B-Instruct-v0.3 and Llama-3.1-70B-Instruct. For each base
method, we report the original result and the result after integrating
ART. }
\label{tab:longbench_avg_main}
\small
\setlength{\tabcolsep}{4pt}
\begin{tabular}{@{}lrrrr@{\hskip 14pt}rrrr@{}}
\toprule
& \multicolumn{4}{c}{\textbf{Mistral-7B-Instruct-v0.3}}
& \multicolumn{4}{c}{\textbf{Llama-3.1-70B-Instruct}} \\
\cmidrule(lr){2-5} \cmidrule(lr){6-9}
& \multicolumn{2}{c}{LB Score}
& \multicolumn{2}{c}{TPOT$\times$}
& \multicolumn{2}{c}{LB Score}
& \multicolumn{2}{c}{TPOT$\times$} \\
\textbf{Method}
& Base & +ART & Base & +ART
& Base & +ART & Base & +ART \\
\midrule
Baseline       & 46.29 & 45.74 & 1.00 & \textbf{1.16}
               & 48.25 & 47.48 & 1.00 & \textbf{1.15} \\
\addlinespace
StrLLM(0.8)    & 40.29 & 40.20 & 1.08 & \textbf{1.24}
               & 42.31 & 41.80 & 1.13 & \textbf{1.27} \\
StrLLM(0.2)    & 31.11 & 31.09 & 1.18 & \textbf{1.30}
               & 33.88 & 33.51 & 1.52 & \textbf{1.66} \\
\addlinespace
SnapKV(0.8)    & 43.92 & 43.12 & 1.07 & \textbf{1.17}
               & 46.49 & 45.54 & 1.08 & \textbf{1.18} \\
SnapKV(0.2)    & 31.17 & 30.80 & 1.17 & \textbf{1.27}
               & 34.24 & 33.59 & 1.47 & \textbf{1.58} \\
\addlinespace
PyramidKV(0.8) & 42.57 & 42.00 & 1.06 & \textbf{1.19}
               & 44.10 & 43.31 & 1.12 & \textbf{1.25} \\
PyramidKV(0.2) & 30.07 & 29.97 & 1.17 & \textbf{1.29}
               & 35.93 & 35.47 & 1.47 & \textbf{1.60} \\
\bottomrule
\end{tabular}
\end{table*}

Table~\ref{tab:longbench_avg_main} reports the average LongBench scores and
decoding TPOT on Mistral-7B-Instruct-v0.3 and Llama-3.1-70B-Instruct. Across
both models, ART introduces only marginal quality score drop. 
In terms of decoding speed, ART consistently decreases TPOT across both dense and sparse KV-cache settings. For Mistral-7B, ART 
achieves an average decrease in TPOT of approximately $10.5\%$, whereas for Llama-3.1-70B the average TPOT reduction is around $9.7\%$, across all KV configurations. The results demonstrate that ART can work with a variety of existing KV management methods to achieve significant speed gains, with minimal effect on the quality of the generated output. Refer to Appendix~\ref{app:longbench_breakdown} for a full category-level breakdown of the results.




\begin{figure*}[t]
  \centering
  \includegraphics[width=.8\linewidth]{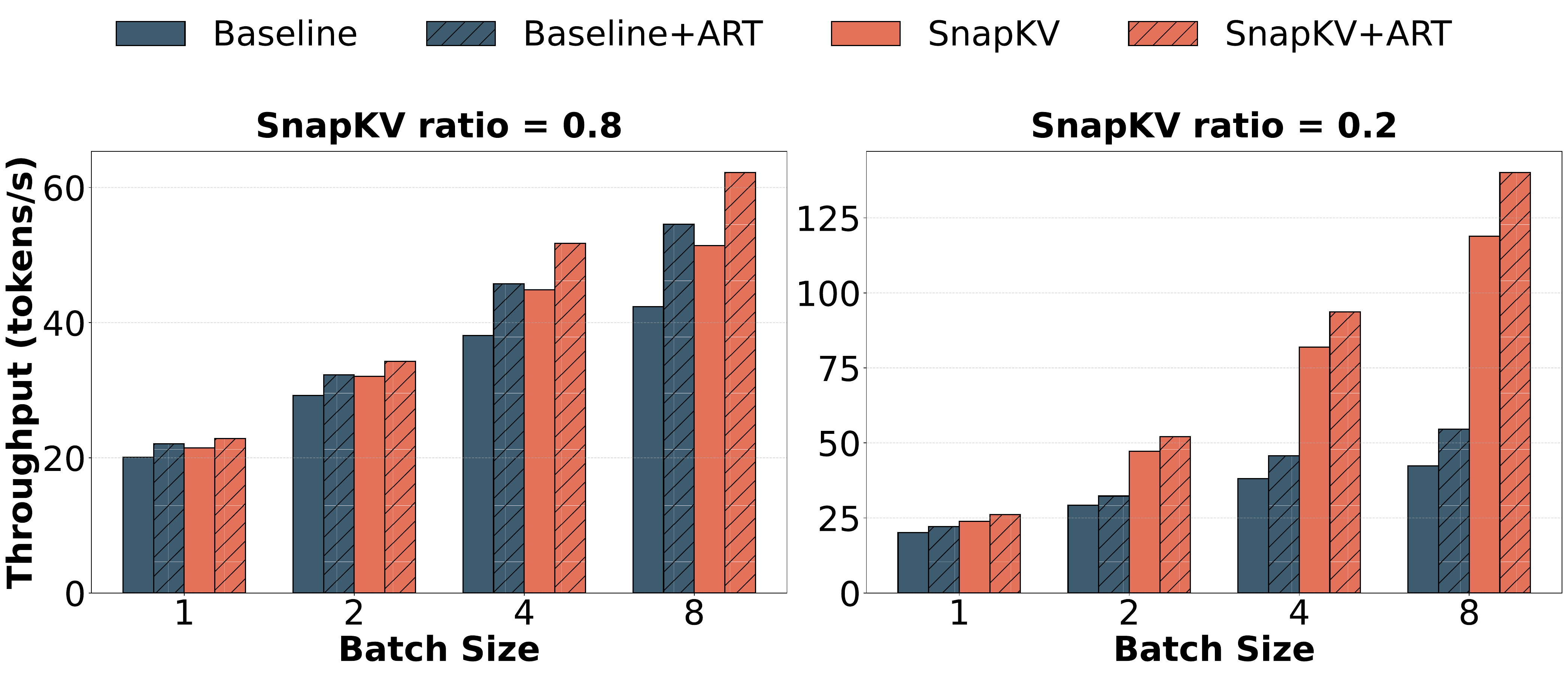} 
  \caption{Generation throughput (tokens/s) versus batch size for Full-KV (Baseline) and SnapKV, combined with ART. The left and right panels correspond to KV cache retention ratios of $0.8$ and $0.2$, respectively. SnapKV+ART performs consistently better, especially for large batch size.}
  \label{fig:bsz}
\end{figure*}

To evaluate ART under realistic serving scenarios, we measure generation throughput across batch sizes $B \in \{1, 2, 4, 8\}$.
Figure~\ref{fig:bsz} summarizes the results using SnapKV as a representative baseline.
Across all batch sizes and KV retention ratios, integrating ART consistently improves throughput.
Notably, the throughput gains become more pronounced as batch size increases, due to 
the memory-bound nature of long-context attention decoding.
As batch size grows, concurrent sequences amplify contention for HBM bandwidth due to repeated KV block accesses.
ART mitigates this bottleneck by reducing the number of accessed KV blocks per sequence.
Crucially, the overhead of ART’s run-time stability detection is constant per KV block and does not scale with batch size, whereas the savings from skipped memory accesses and computation scale linearly with the number of concurrent sequences.
This asymmetry leads to increasing net throughput gains at larger batch sizes.

The speedup of ART increases for long contexts; refer to Appendix~\ref{exp:context_length} for details.

\subsection{RULER Needle-in-a-Haystack Evaluation}

We evaluate ART on the Needle-in-a-Haystack (NIAH) tasks from RULER~\citep{hsieh2024ruler} using Qwen3-8B. NIAH directly tests long-range retrieval robustness, making it a challenging setting for ART: if termination is triggered before a distant evidence block is traversed, accuracy can degrade. 

Table~\ref{tab:niah} shows that applying ART directly to the full KV baseline improves speed but can noticeably reduce NIAH accuracy. This confirms that na\"ive recency-first traversal is not sufficient for retrieval-sensitive contexts, where critical evidence may appear far from the recent window. When ART is combined with retrieval-aware policies such as SnapKV and PyramidKV, it achieves speedup with a much smaller accuracy drop. This suggests that ART is most effective when the traversal policy exposes relevant evidence early, allowing the termination rule to skip redundant remaining blocks rather than replace the underlying retrieval mechanism. The StreamingLLM results further support this interpretation: since its retained context is dominated by sink tokens and recent tokens, its retrieval robustness is limited by the base policy itself, with ART providing the acceleration. 

Overall, the NIAH results clarify that ART is best viewed as an orthogonal run-time termination layer rather than a standalone retrieval method: its robustness depends on the traversal policy, while its efficiency gains come from dynamically skipping redundant KV blocks.


\begin{table}[t]
\centering
\caption{Accuracy (\%) and decoding TPOT speedup (TPOT$\times$) of Qwen3-8B on RULER NIAH,
For each base method, we report the original result and the result after
integrating ART.}
\label{tab:niah}
\small
\setlength{\tabcolsep}{4pt}
\renewcommand{\arraystretch}{1.08}
\begin{tabular}{@{}lcccccccc@{}}
\toprule
\multirow{2}{*}{\textbf{Method}}
& \multicolumn{2}{c}{\textbf{4K}}
& \multicolumn{2}{c}{\textbf{8K}}
& \multicolumn{2}{c}{\textbf{16K}}
& \multicolumn{2}{c}{\textbf{32K}} \\
\cmidrule(lr){2-3} \cmidrule(lr){4-5}
\cmidrule(lr){6-7} \cmidrule(lr){8-9}
& Acc. & TPOT$\times$
& Acc. & TPOT$\times$
& Acc. & TPOT$\times$
& Acc. & TPOT$\times$ \\
\midrule
Baseline       & 99.95 & 1.00          & 99.88 & 1.00          & 99.73 & 1.00          & 99.34 & 1.00 \\
\quad + ART    & 89.11 & \textbf{1.10} & 95.24 & \textbf{1.15} & 85.77 & \textbf{1.17} & 78.65 & \textbf{1.27} \\
\addlinespace
SnapKV         & 92.65 & 1.11          & 98.63 & 1.13          & 99.54 & 1.16          & 99.37 & 1.18 \\
\quad + ART    & 92.73 & 1.11          & 98.73 & \textbf{1.15} & 98.93 & \textbf{1.21} & 95.68 & \textbf{1.36} \\
\addlinespace
StreamingLLM   & 80.13 & 1.12          & 82.72 & 1.15          & 81.42 & 1.20          & 77.78 & 1.21 \\
\quad + ART    & 80.25 & 1.12          & 82.45 & \textbf{1.18} & 81.30 & \textbf{1.26} & 75.81 & \textbf{1.42} \\
\addlinespace
PyramidKV      & 90.09 & 1.10          & 97.34 & 1.12          & 99.36 & 1.18          & 99.24 & 1.19 \\
\quad + ART    & 90.72 & \textbf{1.11} & 96.75 & \textbf{1.15} & 97.78 & \textbf{1.21} & 96.00 & \textbf{1.37} \\
\bottomrule
\end{tabular}
\end{table}

\subsection{Ablation Study}
\label{sec:Ablation_study}

\begin{table*}[t]
\centering
\caption{Ablation study of ART on LongBench. We report the average score of Single-Document QA and Multi-Document QA (QA), summarization (Sum), few-shot learning (Fewshot) and overall LongBench score; the average FlashAttention kernel time per call; and their changes relative to the full ART configuration.}
\label{tab:dart_ablation}
\small
\begin{tabular}{l|ccccr|cr}
\toprule
\textbf{Method} &  QA & Sum & Fewshot&
\textbf{LB Score} & 
$\Delta$ \textbf{Score} & 
\textbf{Time (ms)} & 
$\Delta$ \textbf{Time} (\%) \\
\midrule
ART (full KV) & 35.58 & 26.40 & 63.04 &\textbf{45.74} & -- & 0.633 & -- \\
\midrule
w/o $d_{\mathrm{scale}}$     
& 14.78& 21.18& 27.74&19.34 & $-26.57$ & 0.135 & $-78.7$ \\
w/o $d_{\mathrm{direction}}$ 
& 35.32 & 26.25 & 62.64 &43.87 & $-2.04$  & 0.628 & $-0.8$ \\
w/o patience                 
& 33.58 & 25.96 & 58.11 &41.54 & $-4.37$  & 0.506 & $-20.1$ \\
\bottomrule
\end{tabular}
\end{table*}

Table~\ref{tab:dart_ablation} presents an ablation study that examines the contribution of the scale-based stability criterion,  the direction-based alignment measure, and the patience mechanism of ART.

Removing the scale-based stability criterion ($d_{\mathrm{scale}}$) leads to excessively early termination, resulting in a dramatically reduced FlashAttention kernel time and severe accuracy degradation, especially in QA.
Relying solely on directional consistency is insufficient to guarantee semantic convergence, since the attention output may stabilize in direction while still undergoing substantial magnitude changes.

The $d_{\mathrm{direction}}$ is introduced to address limitations of scale-only measurements.
As shown in Table~\ref{tab:dart_ablation}, incorporating $d_{\mathrm{direction}}$ yields a clear improvement in accuracy while incurring negligible additional kernel runtime.
This demonstrates that explicit directional alignment provides a low-cost yet effective refinement to convergence detection, improving robustness without compromising efficiency.

Removing the patience mechanism also degrades accuracy, despite reducing kernel execution time.
Patience plays an important role in filtering out transient fluctuations during early attention updates, preventing false-positive early termination decisions.
The slight increase in kernel time introduced by patience is therefore a necessary trade-off to ensure stable and reliable termination behavior.

Overall, these results confirm that the components of ART are complementary: 
$d_{\text{scale}}$ safeguards correctness, $d_{\text{direction}}$ enhances robustness, 
and the patience mechanism stabilizes termination decisions. 
Together, they strike an effective balance between decoding accuracy and kernel-level efficiency.


%% file: sections/Conclusion.tex
\section{Conclusion}\label{sec:conclusion}
We propose Attention Run-time Termination (ART), a lightweight run-time mechanism that accelerates long-context LLM decoding by terminating redundant KV traversal during attention execution. Unlike prior approaches that rely primarily on key-based importance estimates, ART monitors the convergence of intermediate attention outputs, thereby capturing the joint influence of keys and values at run time. ART opens a novel research direction: it introduces a new axis of optimization, output-space convergence, works at the kernel level, is complementary to existing KV cache management methods and introduces negligible overhead.

Our experiments show that ART provides significant  decoding-efficiency gains, in terms of TPOT and throughput, on top of dense and sparse KV traversal policies. On LongBench, ART introduces negligible effects on the quality of the results.
On retrieval-sensitive RULER NIAH tasks, quality depends on the base method.
The results suggest that ART should refine, rather than replace, retrieval-aware KV selection. In future work, we plan to explore adaptive thresholds and traversal policies across layers that further improve robustness under retrieval-sensitive contexts.




%% file: sections/appendix.tex
\newpage
\appendix



\section{Theoretical Error Bound Analysis}\label{appendix:theoretical}

Below, we prove the conditional truncation bound of Equation~\ref{eq:main}.

Let $K, V$ be partitioned into $T$ blocks $\{(K_t, V_t)\}_{t=1}^{T}$, with $K_t, V_t \in \mathbb{R}^{n \times D}$ and $nT=N$. Given a query block $Q \in \mathbb{R}^{N \times d},$ the current attention output at $t$ is given by 
$$
U^{(t)} = \sigma\left(\frac{QK_t^\top}{\sqrt{d}}\right)V_t,
$$
where $\sigma$ is the row-wise softmax operator defined element-wise on $S \in \mathbb{R}^{m\times l}$ by 
$$\sigma(S)_{i,j}=\frac{\exp(S_{i,j})}{\sum_{k=1}^l\exp(S_{i,k})}.$$ Given the $t$-th block and the $i$-th query row $q_i \in \mathbb{R}^{1 \times d}$, the vector's current output attention scale is given by
$\ell_{t,i}=\sum_{k=1}^n\exp(\frac{q_i(K^T_{t,k})}{\sqrt{d}})$, and the accumulating output attention scale is given by $A_{t,i}=\sum_{j=1}^t\ell_{j,i}.$

Then, the accumulating output for $q_i$ during block $t$ is obtained by the FlashAttention normalization procedure
$$o^{(t)}_i=\frac{A_{t-1,i}o_i^{(t-1)}+\ell_{t,i}u^{(t)}_i}{A_{t,i}},$$
where $u_i^{(t)}$ is the $i$-th row of $U^{(t)}$ and $o_i^{(t)}$ is the $i$-th row of the accumulating attention output $O^{(t)}.$ By initializing $o_i^{(1)}=u_i^{(1)}$, the normalization procedure defined above returns $O^{(T)}=O$, where $O$ is the entire output block as defined in Expression~\ref{eq:attn}. 

During autoregressive decoding, the most relevant query at each step is the most recently generated token $q_N$. The termination decision for $q_N$ is therefore the most critical; hence, our analysis focuses on a single query vector. 


For the remainder of this analysis we drop the subscript $i$ and let $o^{(t)}, u^{(t)}, A_t, \ell_t$ represent $o^{(t)}_i, u^{(t)}_i, A_{t, i}, \ell_{t,i}$ respectively. Given the FlashAttention normalization, the following recurrence relationship holds:
$$o^{(t)}-o^{(t-1)}=\frac{\ell_{t}}{A_{t}}(u^{(t)}-o^{(t-1)}).$$

This recurrence is the central theme of our analysis. The size of each increment is governed by the mass ratio $\frac{\ell_t}{A_t}$ and the direction $u^{(t)}-o^{(t-1)}.$ We consider a telescoping argument to represent the changes between the output obtained during ART's termination and the complete output.

Let $P \in \mathbb{R}^{m \times d}$ be a linear map. Let $p$ denote the patience, $\tau$ denote the tolerance scale, and $Po^{(t)}=x^{(t)}$ denote the probe vector as defined in \S\ref{sec:method:execution}. Let $t^*$ denote the stopping index for ART, and let $b=t^*-p+1$ represent the first iteration that satisfies the scale and direction conditions under which ART terminates.

\begin{definition}\label{def:update_dominance} \textbf{(Probe Update Dominance)}
   We define the probe update dominance as, $\Delta\coloneqq \max_{b\leq j \leq T}(\|P(u^{(j)}-o^{(j-1)})\|-\|P(u^{(b)}-o^{(b-1)})\|)$.
\end{definition}
\begin{remark}\label{remark:update_dominance}
    Above definition implies that for all $j \ge b$ the expression $\|P(u^{(j)}-o^{(j-1)})\| \le \|P(u^{(b)}-o^{(b-1)})\|+\Delta$ holds.
\end{remark}

\begin{definition} \label{def:terminate}
    \textbf{(ART Termination)} ART terminates on block $t^*$ after $p$ consecutive stable steps of $$\|x^{(j)}-x^{(j-1)}\| \le \tau,\quad~{\rm for~ all }~j={b, b+1, \cdots, t^*}. $$
\end{definition}

\begin{assumption}\label{assum:mass}
    \textbf{(Mass-Regulated Traversal)}
Let $\mu=\frac{b\ell_b}{A_b} $. There exists constants $\hat{\ell} > 0$ and $h > 0$ such that for all $j \ge b$:
{(i)} $\ell_j \le \hat{\ell}$,{(ii)} $A_j \ge h j$, with $\hat{\ell}/h = \mu$.
\end{assumption}

The mass-regulated traversal assumption formalizes the regime in which ART is designed to operate.
ART prioritizes KV blocks that are likely to carry large attention mass, either through recency-first traversal or through an external importance ordering.
After the first \(b\) blocks have been accumulated, later blocks are therefore expected to have bounded marginal mass relative to the accumulated mass.
Even when all blocks have comparable mass, \(\ell_j \approx \ell\) and \(A_j \approx j\ell\), yielding \(\ell_j/A_j \approx 1/j\).
Thus, the assumption captures the natural dilution of each newly visited block as the accumulated attention mass grows.

\begin{assumption} \label{assum:probe}
    \textbf{(Probe Constraints)} For some $\nu > 0$, the linear probe, $P \colon \mathbb{R}^d \to \mathbb{R}^m$ satisfies
\[
\nu \|z\| \le \|P z\| \le \|P\|_2 \|z\|,\quad {\rm for~all}
~z \in \text{Span}\{o^{(j)} - o^{(j-1)} : 1 \le j \le T\}. 
\]
\end{assumption}

The probe is introduced as an efficiency-motivated surrogate for evaluating output stability without explicitly materializing the full attention output during kernel execution.
In Appendix~\ref{app:probe_consistency}, we provide empirical evidence that stability measured in the probe space is well aligned with the oracle stability criterion computed from the full attention output. Now, we are set to quote our intermediate results. 

Lemma~\ref{lemma:harmonic_decay} follows directly from Assumption~\ref{assum:mass}. This assumption is not meant to hold for an arbitrary KV order. It states that once we reach the final stable window, the remaining blocks contribute decreasing normalized mass. This is natural in our setting because $\ell_j/A_j$ represents the relative weight $\ell_j$ contributes to the accumulated attention output factor $A_j$. Our empirical results indirectly support this interpretation, since Figure~\ref{fig:diff} demonstrates that attention output often stabilizes before the entire KV cache is processed. Table~\ref{tab:longbench_avg_main} shows that ART gains speed with little score loss, suggesting the skipped blocks often have small residual effect. Furthermore, Table~\ref{tab:niah} suggests that retrieval-aware traversals such as SnapKV and PyramidKV reinforce this behavior by exposing important blocks earlier, making the decay even more pronounced in practice.

\begin{lemma}
    \textbf{(Harmonic Accumulation Decay)} \label{lemma:harmonic_decay} Let Assumption~\ref{assum:mass} hold. Then for all $j \geq b$ we have $\frac{\ell_j}{A_j} \leq \frac{\mu}{j}.$
\end{lemma}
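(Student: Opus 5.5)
The claim follows by chaining the two inequalities in Assumption~\ref{assum:mass}. Fix any $j \ge b$. Part (i) bounds the numerator of the mass ratio, giving $\ell_j \le \hat{\ell}$. Part (ii) bounds the denominator from below, giving $A_j \ge h j$. Since every $\ell_k$ is a sum of exponentials, it is strictly positive, so $A_j > 0$. Moreover $h>0$ and $j \ge b \ge 1$, so $hj>0$. Taking reciprocals is therefore legitimate and yields $1/A_j \le 1/(hj)$.

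Multiplying the two nonnegative bounds gives
\[
\frac{\ell_j}{A_j} \le \frac{\hat{\ell}}{h j} = \frac{\hat{\ell}/h}{j} = \frac{\mu}{j},
\]
where the last equality is the calibration $\hat{\ell}/h = \mu$ imposed in Assumption~\ref{assum:mass}. This is exactly the statement of Lemma~\ref{lemma:harmonic_decay}.

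There is essentially no obstacle; the only care needed is positivity, so that both inequalities may be multiplied and inverted. One can also check that the normalization is consistent with the definition of $\mu$. At $j=b$ the bound reads $\ell_b/A_b \le \mu/b = \ell_b/A_b$, so it holds with equality. Hence the choice $\mu = b\ell_b/A_b$ is the tightest constant compatible with the assumption, and no case split on $j$ is required.
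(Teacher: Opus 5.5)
Your proof is correct and is exactly the paper's argument: combine $\ell_j \le \hat{\ell}$ with $A_j \ge hj$ and use $\hat{\ell}/h=\mu$ to get $\ell_j/A_j \le \mu/j$. The positivity check and the remark that the bound holds with equality at $j=b$ are extras the paper leaves implicit, and they do no harm.
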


\begin{proof} Assumption~\ref{assum:mass} gives:
    $\frac{\ell_j}{A_j} {\leq} \frac{\hat{\ell}}{jh}=\frac{\mu}{j}$, and completes proof. 
\end{proof}

Lemma~\ref{lemma:harmonic_decay} tells us that the mass factor of $(o^{(j)}-o^{(j-1)})$ grows by the order $\mathcal{O}(1/j).$  Our next Lemma,
Lemma~\ref{lemma:post_termination}, provides an adaptive bound on the probe vectors based on the threshold, $\tau$, which is user-inferred and can be chosen as small as the user wants. Since ART terminates at iteration $t^*,$ then it is obvious that the previous $p$ steps satisfy the difference condition. 
\begin{lemma}\label{lemma:post_termination}
    \textbf{(Post-Termination Probe Decay)} Let Assumption~\ref{assum:mass} hold, then for all $k \geq 1$
    $$\|x^{(t^*+k)}-x^{(t^*+k-1)}\| \le \frac{\tau b}{t^*+k}<\tau.$$
\end{lemma}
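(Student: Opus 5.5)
The plan is to express every probe increment through the FlashAttention recurrence, bound the mass factor with Lemma~\ref{lemma:harmonic_decay}, and calibrate the remaining direction factor using the stability condition at the first stable step $b$.

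\textbf{Step 1 (probe recurrence).} Since $P$ is linear and $x^{(j)}=Po^{(j)}$, the recurrence $o^{(j)}-o^{(j-1)}=\frac{\ell_j}{A_j}(u^{(j)}-o^{(j-1)})$ gives, for every $j$,
\[
\|x^{(j)}-x^{(j-1)}\| = \frac{\ell_j}{A_j}\,\|P(u^{(j)}-o^{(j-1)})\|.
\]
This separates each increment into a scalar mass factor and a probe-space direction factor.

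\textbf{Step 2 (calibration at $b$).} At $j=b$, the definition $\mu=b\ell_b/A_b$ gives $\ell_b/A_b=\mu/b$ exactly. Definition~\ref{def:terminate} gives $\|x^{(b)}-x^{(b-1)}\|\le\tau$. Combining these with Step 1 yields
\[
\|P(u^{(b)}-o^{(b-1)})\|\le \frac{\tau b}{\mu}.
\]
So the stability observed at the start of the patience window caps the size of the probe update direction there.

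\textbf{Step 3 (transfer to $j=t^*+k$).} For $j=t^*+k>b$, Lemma~\ref{lemma:harmonic_decay} gives $\ell_j/A_j\le \mu/j$. It remains to bound $\|P(u^{(j)}-o^{(j-1)})\|$ by the value at $b$. Remark~\ref{remark:update_dominance} gives
\[
\|P(u^{(j)}-o^{(j-1)})\|\le \|P(u^{(b)}-o^{(b-1)})\|+\Delta .
\]
Substituting Steps 2 and 3 into Step 1 then gives
\[
\|x^{(j)}-x^{(j-1)}\|\le \frac{\mu}{j}\Big(\frac{\tau b}{\mu}+\Delta\Big)=\frac{\tau b+\Delta\mu}{t^*+k}.
\]
This is exactly the numerator that appears in \eqref{eq:main}.

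\textbf{Step 4 (the inequality as stated).} The stated bound $\tau b/(t^*+k)$ holds in the regime $\Delta\le 0$, i.e.\ when later probe update directions are no larger than the one at $b$. The closing strict inequality is immediate: $b=t^*-p+1\le t^*<t^*+k$ for $k\ge 1$, so $\tau b/(t^*+k)<\tau$.

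\textbf{Main obstacle.} The hard step is Step 3: the direction factor $P(u^{(j)}-o^{(j-1)})$ after termination is not controlled by anything observed inside the stable window. That window only constrains the products $\frac{\ell_j}{A_j}\|P(u^{(j)}-o^{(j-1)})\|$ for $b\le j\le t^*$. I would therefore handle this factor purely through Definition~\ref{def:update_dominance}. I would state explicitly that the lemma in its displayed form uses $\Delta\le0$, and that the general form carries the additive $\Delta\mu$ term, which is then summed harmonically over $j=t^*+1,\dots,T$ to produce the $\ln(T/t^*)$ factor in \eqref{eq:main}.
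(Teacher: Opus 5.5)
Your proposal matches the paper's proof step for step: linearity of $P$ gives $x^{(j)}-x^{(j-1)}=\frac{\ell_j}{A_j}P(u^{(j)}-o^{(j-1)})$, stability at $b$ together with $\ell_b/A_b=\mu/b$ gives the calibration $\|P(u^{(b)}-o^{(b-1)})\|\le \tau b/\mu$, and Remark~\ref{remark:update_dominance} with Lemma~\ref{lemma:harmonic_decay} then yields $(\tau b+\Delta\mu)/(t^*+k)$, which is exactly the bound the paper's own proof derives. Your caveat is justified: the maximum defining $\Delta$ includes $j=b$, so $\Delta\ge 0$ always, the displayed bound $\tau b/(t^*+k)$ therefore requires $\Delta=0$, and the paper's closing step $(b\tau+\mu\Delta)/(t^*+k)<\tau$ holds only when $\mu\Delta<(p-1+k)\tau$.
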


\begin{proof}
    The linearity of $P$ gives $x^{(j)}-x^{(j-1)}=\frac{\ell_j}{A_j}P(u^{(j)}-o^{(j-1)}).$ 
    
    By Remark \ref{remark:update_dominance}, for every $j \ge b,$
    $$\|x^{(j)}-x^{(j-1)}\| =\frac{\ell_j}{A_{j}}\|P(u^{(j)}-o^{(j-1)})\| \le \frac{\ell_j}{A_j}(\|P(u^{(b)}-o^{(b-1)})\|+\Delta).$$
    Assumption~\ref{assum:mass} and Definition~\ref{def:terminate} give $\|x^{(b)}-x^{(b-1)}\| < \tau$, hence $\|P(u^{(b)}-o^{(b-1)})\| < \frac{A_b}{\ell_b} \tau{=} \frac{b}{\mu}\tau$. Then,
    $$\|x^{(j)}-x^{(j-1)}\| \le \frac{\ell_j}{A_j}(\|P(u^{(b)}-o^{(b-1)})\|+\Delta)  \le \frac{b}{j}\tau+\frac{\mu}{j}\Delta.$$ Setting $j=t^*+k$ gives the result $\|x^{(t^*+k)}-x^{(t^*+k-1)}\| \le \frac{b\tau + \mu \Delta}{t^*+k}<\tau$.
\end{proof}
Taken together with Assumption~\ref{assum:mass}, Lemma \ref{lemma:post_termination} shows that the difference continually shrinks by a factor of $\mathcal{O}(b/j).$ Our next Lemma quantifies the difference between the current output vector, $u^{(t)}$, and the previous accumulating attention output vector, $o^{(t-1)}.$


\begin{lemma}\label{lemma:uniform_bound} \textbf{(Uniform Difference Bound)} For $t=2,\cdots, T,$ $\|u^{(t)}-o^{(t-1)}\| \le 2C_v$, where $C_v$ is the largest row norm in the value matrix $V$, i.e., $C_v \coloneqq \max{_{1 \le k \le N}} \|v_{k}\|.$
\end{lemma}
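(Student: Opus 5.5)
The plan is to show that both $u^{(t)}$ and $o^{(t-1)}$ lie in the convex hull of the value rows $\{v_k\}_{k=1}^N$. The bound then follows from the triangle inequality: $\|u^{(t)}-o^{(t-1)}\|\le\|u^{(t)}\|+\|o^{(t-1)}\|\le C_v+C_v$.

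First, consider $u^{(t)}$. By definition, $u^{(t)}=\sum_{k=1}^n w_{t,k}\,v_{t,k}$, where $w_{t,k}=\exp(q K_{t,k}^\top/\sqrt d)/\ell_t$. These weights are nonnegative and sum to one, because $\ell_t$ is exactly the normalizer of the softmax restricted to block $t$. Hence $u^{(t)}$ is a convex combination of rows of $V_t\subseteq V$. Since the norm is convex, $\|u^{(t)}\|\le\max_k\|v_{t,k}\|\le C_v$.

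Second, consider $o^{(t-1)}$, which we handle by induction on $t$ using the FlashAttention recurrence. The base case is $o^{(1)}=u^{(1)}$, which is covered by the first step. For the inductive step, the recurrence $o^{(s)}=\frac{A_{s-1}}{A_s}o^{(s-1)}+\frac{\ell_s}{A_s}u^{(s)}$ has coefficients that are nonnegative and sum to one, since $A_s=A_{s-1}+\ell_s$ and all $\ell$'s are positive. So $o^{(s)}$ is a convex combination of $o^{(s-1)}$ and $u^{(s)}$, and thus $\|o^{(s)}\|\le\max(\|o^{(s-1)}\|,\|u^{(s)}\|)\le C_v$. Alternatively, unrolling the recurrence gives $o^{(s)}=\sum_{j\le s}\frac{\ell_j}{A_s}u^{(j)}$, which is the exact softmax over the first $s$ blocks.

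Finally, combine the two bounds with the triangle inequality for every $t=2,\dots,T$. The same convexity argument also gives the sharper statement $\|u^{(t)}-o^{(t-1)}\|\le \operatorname{diam}\{v_k\}\le 2C_v$.

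There is no real obstacle here. The only point needing care is the notational mismatch in the appendix, where $u^{(t)}$ is written as a full-block softmax times $V_t$. One should make explicit that the per-block normalization by $\ell_t$ is what makes the weights sum to one, so that no stray global normalization constant inflates the norm.
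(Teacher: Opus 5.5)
Your proposal is correct and follows essentially the same route as the paper: both show that $u^{(t)}$ (a per-block softmax average of the rows of $V_t$) and $o^{(t-1)}$ each have norm at most $C_v$, then apply the triangle inequality. The one small difference is that the paper writes $o^{(t-1)}$ directly as the convex combination $\sum_{j\le t-1}\frac{\ell_j}{A_{t-1}}u^{(j)}$, which is your unrolled alternative to the induction; your sharpening to the diameter of the value rows is also valid, since both points lie in their convex hull.
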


\begin{proof}
    Since each $u^{(t)}$ represents the matrix product of a softmax output vector and the block $V_t,$ then each $u^{(t)}$ is a convex combination of the rows $\{(v_t)_r\}_{1 \leq r \leq n}.$ Hence,
    $$\|u^{(t)}\| \le \max_r\|(v_t)_r\| \le C_v.$$

    Each $o^{(t)}$ can be written as $\sum_{j=1}^t\frac{\ell_j}{A_t}u^{(j)},$ where each $\frac{\ell_j}{A_t} >0$ and sum to 1. Hence, $o^{(t)}$ is a convex combination of $u^{(t)}.$ Hence,
    $$\|o^{(t)}\|=\|\sum_{j=1}^t\frac{\ell_j}{A_t}u^{(j)}\| \le \max_{j}\|u^{(j)}\| \le C_v.$$

    By the triangle-inequality, $\|u^{(t)}-o^{(t-1)}\| \le \|u^{(t)}\|+\|o^{(t-1)}\| \le 2C_v,$ hence the result.
\end{proof}

Theorem~\ref{thm:probe_termination} is the central result that bounds the truncation error using the probe-based termination condition. The bound depends on the tolerance $\tau,$ the probe constant $\nu,$ and the iteration ratio $T/t^*.$ 

\begin{theorem}\label{thm:probe_termination} 
    Let Assumptions~\ref{assum:mass}-\ref{assum:probe} hold, then 
    $$\|o^{(T)}-o^{(t^*)}\| \le \frac{\tau b+\Delta\mu}{ \nu}\ln \left(\frac{T}{t^*}\right).$$
\end{theorem}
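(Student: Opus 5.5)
The plan is to telescope the truncation error over the blocks that ART skips and bound each increment through the probe. Write
\[
o^{(T)}-o^{(t^*)}=\sum_{j=t^*+1}^{T}\bigl(o^{(j)}-o^{(j-1)}\bigr),
\]
and apply the triangle inequality. Each increment $o^{(j)}-o^{(j-1)}$ lies in the span named in Assumption~\ref{assum:probe}. The lower probe bound therefore gives
\[
\|o^{(j)}-o^{(j-1)}\|\le \tfrac{1}{\nu}\,\|P(o^{(j)}-o^{(j-1)})\|=\tfrac{1}{\nu}\,\|x^{(j)}-x^{(j-1)}\|.
\]
This turns the output-space error into a sum of probe-space increments over the skipped blocks.

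Next, bound each probe increment using the intermediate estimate in the proof of Lemma~\ref{lemma:post_termination}. That proof combines Remark~\ref{remark:update_dominance}, Lemma~\ref{lemma:harmonic_decay}, and the stability of step $b$ from Definition~\ref{def:terminate}. I will use its inequality before the final ``$<\tau$'' step, namely that for every $j\ge b$
\[
\|x^{(j)}-x^{(j-1)}\|\le \frac{b\tau+\mu\Delta}{j}.
\]
Since $t^*+1>b$, this applies to every term in the telescoped sum. Hence
\[
\|o^{(T)}-o^{(t^*)}\|\le \frac{b\tau+\mu\Delta}{\nu}\sum_{j=t^*+1}^{T}\frac1j .
\]

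Finally, bound the harmonic tail by an integral comparison. Because $1/x$ is decreasing, $\frac1j\le\int_{j-1}^{j}\frac{dx}{x}$ for $j\ge 2$. Summing from $t^*+1$ to $T$ gives $\sum_{j=t^*+1}^T \frac1j\le \int_{t^*}^{T}\frac{dx}{x}=\ln(T/t^*)$, which yields the stated bound. If $t^*=T$, the sum is empty and both sides vanish, so that case is trivial.

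The main obstacle is justifying the per-step $1/j$ bound. It needs the scale condition at the single step $b$ to control $\|P(u^{(b)}-o^{(b-1)})\|$ through $A_b/\ell_b=b/\mu$. It also needs the dominance constant $\Delta$ to control all later update directions, and Lemma~\ref{lemma:harmonic_decay} to supply $\ell_j/A_j\le \mu/j$. I will check that these pieces combine with the correct constants, $\tfrac{\ell_j}{A_j}\cdot\tfrac{b\tau}{\mu}\le \tfrac{b\tau}{j}$ and $\tfrac{\ell_j}{A_j}\Delta\le\tfrac{\mu\Delta}{j}$. These hold by Lemma~\ref{lemma:harmonic_decay}, so the numerator is exactly $\tau b+\Delta\mu$.
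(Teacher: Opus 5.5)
Your proposal is correct and follows essentially the same route as the paper. You telescope over the skipped blocks, convert each output increment into a probe increment via the lower bound in Assumption~\ref{assum:probe}, apply the per-step bound $\frac{b\tau+\mu\Delta}{j}$ from Lemma~\ref{lemma:post_termination}, and compare the harmonic tail with $\int_{t^*}^{T} dx/x$. Citing the inequality from inside that lemma's proof, rather than its stated conclusion, is actually the more accurate citation: the lemma as stated drops the $\mu\Delta$ term, and its final ``$<\tau$'' is neither needed nor justified in general.
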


\begin{proof}
    The triangle-inequality and Assumption~\ref{assum:probe} gives
    $$\|o^{(T)}-o^{(t^*)}\| \le \sum_{j=t^*+1}^T \|o^{(j)}-o^{(j-1)}\| \le \frac{1}{\nu} \sum_{j=t^*+1}^T \|x^{(j)}-x^{(j-1)}\|.$$
    Using Lemma~\ref{lemma:post_termination}, the following result can be obtained 
    $$ \frac{1}{\nu} \sum_{j=t^*+1}^T \|x^{(j)}-x^{(j-1)}\| \le \frac{\tau b+\Delta \mu}{\nu}\sum_{j=t^*+1}^T\frac{1}{j} \le \frac{\tau b+\Delta \mu}{\nu}\int_{t^*}^T\frac{1}{x}dx = \frac{\tau b+\Delta \mu}{\nu} \ln \left( \frac{T}{t^*}\right).$$
\end{proof}

Theorem~\ref{thm:uniform_bound} provides a complementary bound through Lemma~\ref{lemma:uniform_bound}, expressed in terms of value matrix norm $C_v$ and the ratio $T/t^*.$

\begin{theorem}\label{thm:uniform_bound} Let Assumption~\ref{assum:mass} hold, then
$$\|o^{(T)}-o^{(t^*)}\| \le 2\mu C_v \ln\left(\frac{T}{t^*}\right).$$
    
\end{theorem}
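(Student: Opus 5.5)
The plan is to telescope in the original output space and bound each increment directly. This avoids the probe, so neither Assumption~\ref{assum:probe} nor the update-dominance quantity $\Delta$ is needed.

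First, apply the triangle inequality to the telescoping sum $o^{(T)}-o^{(t^*)}=\sum_{j=t^*+1}^{T}(o^{(j)}-o^{(j-1)})$. This gives
\[
\|o^{(T)}-o^{(t^*)}\|\le \sum_{j=t^*+1}^{T}\|o^{(j)}-o^{(j-1)}\|.
\]

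Next, bound each increment using the FlashAttention recurrence $o^{(j)}-o^{(j-1)}=\frac{\ell_j}{A_j}(u^{(j)}-o^{(j-1)})$. We have
\[
\|o^{(j)}-o^{(j-1)}\|=\frac{\ell_j}{A_j}\,\|u^{(j)}-o^{(j-1)}\|.
\]
For every $j\ge t^*+1\ge 2$, Lemma~\ref{lemma:uniform_bound} bounds the second factor by $2C_v$. Since $t^*\ge b$, every such $j$ also satisfies $j\ge b$, so Lemma~\ref{lemma:harmonic_decay} (which follows from Assumption~\ref{assum:mass}) gives $\frac{\ell_j}{A_j}\le \frac{\mu}{j}$. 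Combining the two estimates, each increment is at most $2\mu C_v/j$.

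Finally, sum these bounds and compare the sum with an integral, exactly as in the proof of Theorem~\ref{thm:probe_termination}. Because $1/x$ is decreasing, $\frac1j\le\int_{j-1}^{j}\frac{dx}{x}$, and therefore
\[
\sum_{j=t^*+1}^{T}\frac{2\mu C_v}{j}\le 2\mu C_v\int_{t^*}^{T}\frac{dx}{x}=2\mu C_v\ln\!\left(\frac{T}{t^*}\right).
\]

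There is no real obstacle in this argument. The only points needing care are bookkeeping on the index ranges:
\begin{itemize}
\item Lemma~\ref{lemma:harmonic_decay} requires $j\ge b$, which holds because $b=t^*-p+1\le t^*<j$.
\item Lemma~\ref{lemma:uniform_bound} requires $j\ge 2$, which holds because $j\ge t^*+1\ge 2$.
\end{itemize}
If $t^*=T$, the sum is empty and both sides of the bound are zero.
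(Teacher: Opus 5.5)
Your proof is correct and follows the paper's argument: telescoping, the recurrence $o^{(j)}-o^{(j-1)}=\frac{\ell_j}{A_j}(u^{(j)}-o^{(j-1)})$, Lemma~\ref{lemma:uniform_bound} for the $2C_v$ factor, Lemma~\ref{lemma:harmonic_decay} for the $\mu/j$ factor, and the integral comparison for the harmonic tail. Your write-up is in fact cleaner than the paper's displayed chain, which writes $\frac{\ell_j}{A_j}\|o^{(j)}-o^{(j-1)}\|$ where $\frac{\ell_j}{A_j}\|u^{(j)}-o^{(j-1)}\|$ is meant, and your index checks ($j\ge b$ and $j\ge 2$) spell out what the paper leaves implicit.
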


\begin{proof}
    The triangle-inequality and Lemma~\ref{lemma:uniform_bound} gives
    $$\|o^{(T)}-o^{(t^*)}\| \le \sum_{j=t^*+1}^T\|o^{(j)}-o^{(j-1)}\|=\sum_{j=t^*+1}^T\frac{\ell_j}{A_j}\|o^{(j)}-o^{(j-1)}\| \le 2C_v\sum_{j=t^*+1}^T\frac{\mu}{j}.$$
    
    We finish the proof by using the same upper bound on the sum $\sum_{j=t^*+1}^T\frac{1}{j} \le \ln(\frac{T}{t^*}):$

    $$\|o^{(T)}-o^{(t^*)}\| \le 2\mu C_v\sum_{j=t^*+1}^T\frac{1}{j} \le 2 \mu C_v \int_{t}^T\frac{dx}{x}=2\mu C_v\ln \left( \frac{T}{t^*}\right).$$
\end{proof}

Theorems 1 and 2 can be put together under a single statement, as given in Remark \ref{corr:bound}. 
\begin{remark}\label{corr:bound}
    Let Assumptions~\ref{assum:mass}-~\ref{assum:probe} hold, then
$$\|o^{(T)}-o^{(t^*)}\| \le \min\{\frac{\tau b + \mu \Delta}{\nu},2\mu C_v\} \ln\left(\frac{T}{t^*}\right).$$
\end{remark}


\section{Probe--Oracle Consistency}
\label{app:probe_consistency}

ART applies a lightweight probe extracted from the distributed attention
accumulator rather than reconstructing the full attention output at every KV
block. While this design is essential for maintaining low kernel overhead, it
raises a natural question: whether the probe observes the same convergence
behavior as the full attention output. To validate this, we compare the
probe-based stopping decision against an oracle detector that applies the same
stability rule to the fully materialized attention output.

Concretely, for each decoding, we record the stopping block predicted
by the probe and compare it with the stopping block obtained by the oracle. We
classify the outcomes into three categories. An agreement means that the probe
and oracle make the same stopping decision, either stopping at the same block or
both deciding not to stop. A false positive (FP) means that the probe stops
earlier than the oracle, or that the probe stops while the oracle never stops.
This case corresponds to premature termination and is the only category that
may introduce accuracy risk. A false negative (FN) means that the oracle stops
earlier than the probe, or that the oracle stops while the probe never stops.
This case is conservative: it may reduce the achievable speedup but does not
increase approximation risk. For samples where both the probe and oracle stop
but at different blocks, we also report the block offset $\Delta_{\mathrm{blk}}$.

Table~\ref{tab:art_probe_consistency} reports the consistency results on
Qwen3-8B, which contains 36 layers with head dimension 128. We evaluate both
the effect of sequence length and the effect of layer depth. In the sequence
length study, we fix the last layer and place the needle at the middle of the
context. Across sequence lengths from 1K to 8K tokens, the probe almost exactly
matches the oracle decision. In particular, the agreement remains 100.00\% up
to 4K tokens and 99.88\% at 8K tokens, with only 0.13\% false positives. This
indicates that increasing the number of KV blocks does not noticeably degrade
the reliability of the probe.

We further evaluate different layers under a fixed 4K-token context. The probe
remains highly consistent with the oracle, although the agreement varies across
layers. The last layer again shows perfect agreement, while intermediate layers
exhibit slightly higher mismatch rates. In Layer 17, the false-positive rate is
5.56\%, and the mean block offset among mismatched stopping cases is only
$1.34\pm0.47$ blocks. This suggests that even when the probe stops earlier than
the oracle, the discrepancy is typically small. Since ART uses a patience-based
criterion, such small offsets can be further controlled by increasing the
patience parameter or using more conservative thresholds.

Overall, these results provide empirical support for the probe approximation
used by ART. The probe is not assumed to be an exact reconstruction of the full
attention output; instead, it is required to preserve the convergence behavior
relevant to the stopping decision. The high agreement and low false-positive
rates in Table~\ref{tab:art_probe_consistency} indicate that the probe satisfies
this requirement in practice, especially in the late layers where decoding
decisions are most directly reflected in the final representation.

\begin{table}[t]
\centering
\caption{%
    Consistency between the probe and the oracle on Qwen3-8B
    (36 layers, head\_dim\,=\,128).
    \textbf{FP}: probe stops before oracle, or probe stops while oracle never does
    (premature termination risk).
    \textbf{FN}: oracle stops before probe, or oracle stops while probe never does
    (conservative; no accuracy risk).
    $\boldsymbol{\Delta}$\textbf{blk}: mean $\pm$ std block offset among mismatched stopping cases (i.e., both probe and oracle stop, but at different blocks)..
}
\label{tab:art_probe_consistency}
\small
\begin{tabular}{llcccc}
\toprule
\textbf{Condition} & $n$ & \textbf{Agree\,$\uparrow$} & \textbf{FP\,$\downarrow$} & \textbf{FN\,$\downarrow$} & \boldmath$\Delta$\textbf{blk} \\
\midrule
\multicolumn{6}{l}{\textit{(a) Sequence length} \quad (layer 35, needle @ 50\%)} \\
\midrule
\phantom{xx}1\,024 tokens \; & 800 & \textbf{100.00\%} & \textbf{0.00\%} & 0.00\% & — \\
\phantom{xx}2\,048 tokens \; & 800 & \textbf{100.00\%} & \textbf{0.00\%} & 0.00\% & — \\
\phantom{xx}4\,096 tokens \; & 800 & \textbf{100.00\%} & \textbf{0.00\%} & 0.00\% & — \\
\phantom{xx}8\,192 tokens  & 800 & 99.88\%           & 0.13\%          & 0.00\% & — \\
\midrule
\multicolumn{6}{l}{\textit{(b) Layer depth} \quad (4\,096 tokens, needle @ 50\%)} \\
\midrule
\phantom{xx}Layer\;7   & 1\,600 & 99.50\%           & 0.50\%          & 0.00\% & 1.00$\pm$0.00 \\
\phantom{xx}Layer\;17       & 1\,600 & 94.06\%           & 5.56\%          & 0.38\% & 1.34$\pm$0.47 \\
\phantom{xx}Layer\;35       & 1\,600 & \textbf{100.00\%} & \textbf{0.00\%} & 0.00\% & — \\
\bottomrule
\end{tabular}
\end{table}

\section{Benchmark Dataset Description}
\label{app:longbench_description}

In this study, we utilize LongBench~\cite{bai2024longbench} and RULER~\cite{hsieh2024ruler} as our evaluation benchmarks. LongBench is the first bilingual, multi-task benchmark specifically designed for long-context Large Language Models (LLMs). This benchmark comprises 21 distinct datasets across 6 key task categories, aiming to provide a comprehensive assessment of a model's capabilities in understanding, generation, and reasoning over long texts.

Most tasks in LongBench are derived from existing public datasets but have been rigorously cleaned and filtered to ensure suitability for long-context evaluation (e.g., filtering out short text samples). The benchmark covers both English and Chinese languages. The average length of the English datasets is 6,711 words, while the Chinese datasets average 13,386 characters.

LongBench consists of the following six major task categories:

\begin{itemize}
    \item \textbf{Single-document QA}: Includes NarraQA, Qasper, MultiFieldQA-en, and MultiFieldQA-zh. These tasks require the model to retrieve and integrate information from a long document to answer specific questions.
    \item \textbf{Multi-document QA}: Includes HotpotQA, 2WikiMultihopQA, Musique and DuReader. These tasks involve complex reasoning and information synthesis across multiple documents.
    \item \textbf{Summarization}: Includes GovReport, QMSum, MultiNews and VCSUM. The goal is to generate high-quality summaries for long meeting transcripts, news collections, or government reports.
    \item \textbf{Few-shot Learning}: Includes TREC, LSHT, TriviaQA, and SAMSum. These tasks provide multiple long-context examples to evaluate the model's in-context learning capabilities.
    \item \textbf{Synthetic Tasks}: Includes Passage Retrieval (en and cn) and Number String. These are specifically designed pressure tests to detect the model's ability to accurately locate specific information within an extremely long context.
    \item \textbf{Code Completion}: Includes LCC and RepoBench-P. These tasks evaluate the model's understanding and completion abilities within long code repository files.
\end{itemize}

We additionally evaluate on the Needle-in-a-Haystack (NIAH) tasks from RULER~\cite{hsieh2024ruler}, a synthetic benchmark designed to stress-test a model's ability to retrieve specific information embedded within long, distracting contexts. RULER's NIAH suite comprises eight tasks of increasing complexity, covering four retrieval scenarios:

\begin{itemize}
    \item \textbf{Single-needle retrieval} (\texttt{niah\_single\_1/2/3}): A single key--value pair is hidden within a long distractor text. Variants differ in needle complexity: short word, long phrase, and UUID string, respectively.
    \item \textbf{Multi-key retrieval} (\texttt{niah\_multikey\_1/2/3}): Multiple needles with distinct keys are inserted; the model must retrieve the value for a specific queried key. Variants scale the number of distracting needles from 2 to 4.
    \item \textbf{Multi-value retrieval} (\texttt{niah\_multivalue}): A single key is associated with multiple values spread across the context; the model must retrieve all of them.
    \item \textbf{Multi-query retrieval} (\texttt{niah\_multiquery}): Multiple key--value pairs are hidden, and the model is asked to answer queries about several of them simultaneously.
\end{itemize}

We evaluate all eight NIAH tasks at context lengths of 4K, 8K, 16K, and 32K tokens. Each sample is scored as exact match (0 or 1) and results are averaged across samples and tasks.



\section{Configuration Detail}
\label{sec:Configuration Detail}

\begin{table}[h]
    \centering
    \caption{Hyperparameter configurations mapped to original paper notations.}
    \label{tab:kv_config}
    \small
    \begin{tabular}{l@{\hspace{0.5cm}}l@{\hspace{0.5cm}}c@{\hspace{0.5cm}}c}
        \toprule
        \textbf{Method} & \textbf{Paper Notation / Term} & \textbf{Symbol} & \textbf{Value} \\
        \midrule
        StreamingLLM & Initial Tokens (Attention Sinks) & - & 4 \\
        SnapKV & Observation Window & $L_{obs}$ & 32 \\
               & Pooling Kernel Size & - & 5 \\
        PyramidKV & Instruction Tokens (Local Window) & $\alpha$ & 128 \\
                  & Pooling Kernel Size & $\beta$ & 5 \\
        \bottomrule
    \end{tabular}
\end{table}

We evaluate the performance of KV cache compression methods on \textbf{LongBench}~\cite{bai2024longbench} using the \textbf{Mistral-7B-Instruct-v0.3} model. To ensure a faithful implementation, we align our hyperparameter settings shown in Table~\ref{tab:kv_config} with the notations and definitions proposed in their respective original papers.

\paragraph{StreamingLLM.}
Following the findings of~\cite{xiao_efficient_2024}, we rely on the \textit{Attention Sink} phenomenon to maintain streaming stability. We retain the Key and Value states of the initial tokens as anchors. Specifically, we set the number of sink tokens to 4, combined with a rolling cache of recent tokens. This configuration addresses the "perplexity surge" issue observed when initial tokens are evicted from the window attention~\cite{xiao_efficient_2024}.

\paragraph{SnapKV.}
Consistent with the algorithm described by~\cite{li_snapkv_2024}, we implement the observation-based compression strategy. We define the \textit{Observation Window} size, denoted as $L_{obs}$ in the paper, to be 32. This window captures the attention patterns from the last segment of the prompt. To effectively cluster important features and avoid noise, we utilize the voting mechanism with a max-pooling layer, setting the pooling kernel size to 5.

\paragraph{PyramidKV.}
Building upon the \textit{Pyramidal Information Funneling} hypothesis~\cite{cai_pyramidkv_2024}, we employ a layer-wise dynamic budget allocation. While PyramidKV shares the pooling kernel setting ($k=5$) with SnapKV, we significantly increase the size of the local window, referred to as "instruction tokens" and denoted as $\alpha$ in~\cite{cai_pyramidkv_2024}, to 128. This adjustment of $\alpha$ (where $\alpha > L_{obs}$) ensures that the model can accurately measure the importance of tokens even under high compression rates in upper layers ($k^l$), mitigating the risk of information loss during the pyramidal aggregation process.


\section{Computational Overhead}
\label{sec:computational_overhead}

\begin{table}[h]
  \centering
  \caption{ART incurs negligible overhead when early termination is disabled.
With $p=\infty$, the ART detector executes at every FlashAttention call without triggering termination, increasing kernel time by only 1.3\%.}
  \label{tab:overhead}
  \small
  \begin{tabular}{lc}
    \toprule
    Setting & Avg. FA kernel time (ms)\\
    \midrule
    Baseline  & 0.78 \\ patience=$\infty$ & 0.79 \\
    \bottomrule
  \end{tabular}
\end{table}

This computational overhead isolates the intrinsic runtime overhead of ART’s early-termination detector, independent of any actual early stopping.
Our goal is to verify that the detector itself introduces negligible computational cost.

ART is evaluated after each tile completes the standard streaming softmax update and value aggregation, followed by a block-wide synchronization.
This design ensures deterministic control flow and avoids warp divergence.
Crucially, the detector incurs a constant amount of computation per tile and introduces no additional global memory accesses; therefore, the overall complexity remains proportional to the number of processed KV tiles.




To quantify this overhead, we disable early termination by setting the patience parameter to $p=\infty$, forcing the ART detector to execute at every FlashAttention invocation without ever triggering termination.
This configuration allows us to measure the pure computational overhead of the detection mechanism.
As reported in Table~\ref{tab:overhead}, enabling ART increases the average FlashAttention kernel runtime by only 1.3\% compared to the baseline, while producing identical evaluation scores.
These results confirm that ART introduces negligible computational overhead. 

\section{Parameter Sensitivity}
\label{sec:Parameter Sensitivity}

\begin{figure}[h]
\centering
\includegraphics[width=\linewidth]{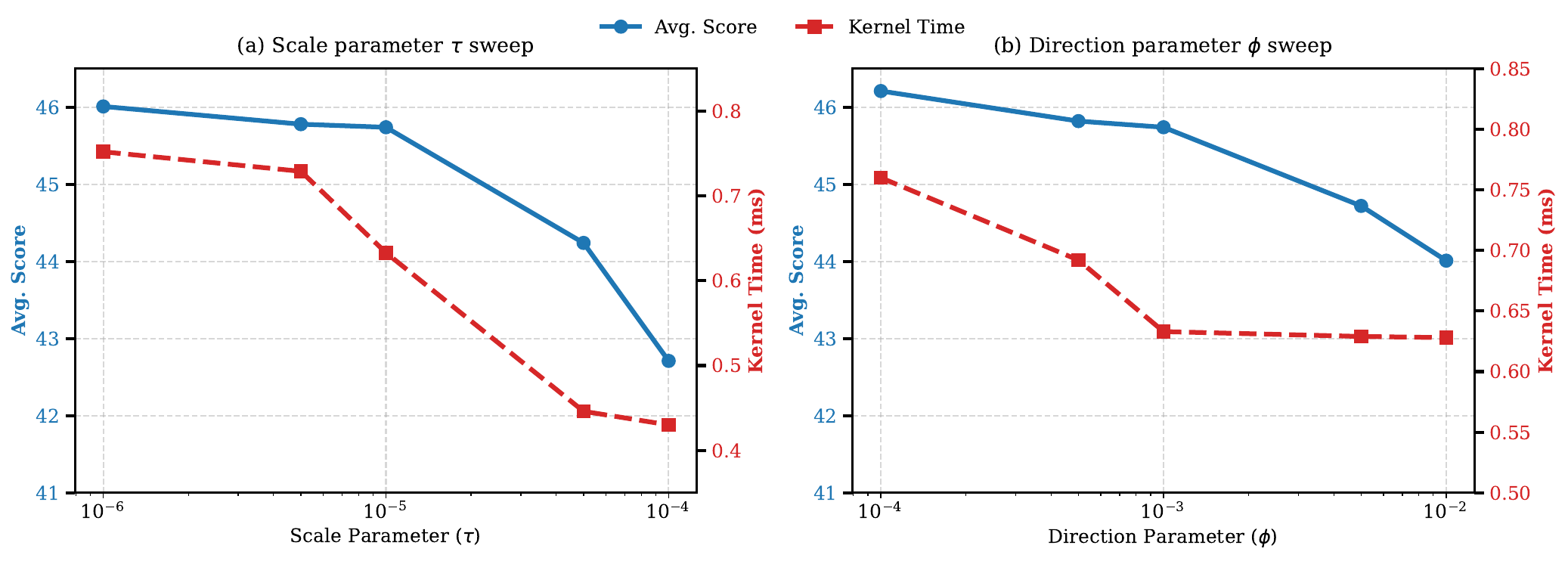}
\caption{Sensitivity analysis of the scale parameter $\tau$. The blue solid line (left axis) tracks the LongBench score, while the red dashed line (right axis) shows the kernel execution time. The results highlight a trade-off: $\tau=10^{-5}$ and $\phi=10^{-3}$ offers an optimal balance between stability and speed, while larger values provide maximum acceleration with a slight performance cost.}
\label{fig:sensitivity_tauphi}
\end{figure}

\begin{figure}[h]
\centering
\includegraphics[width=.6\linewidth]{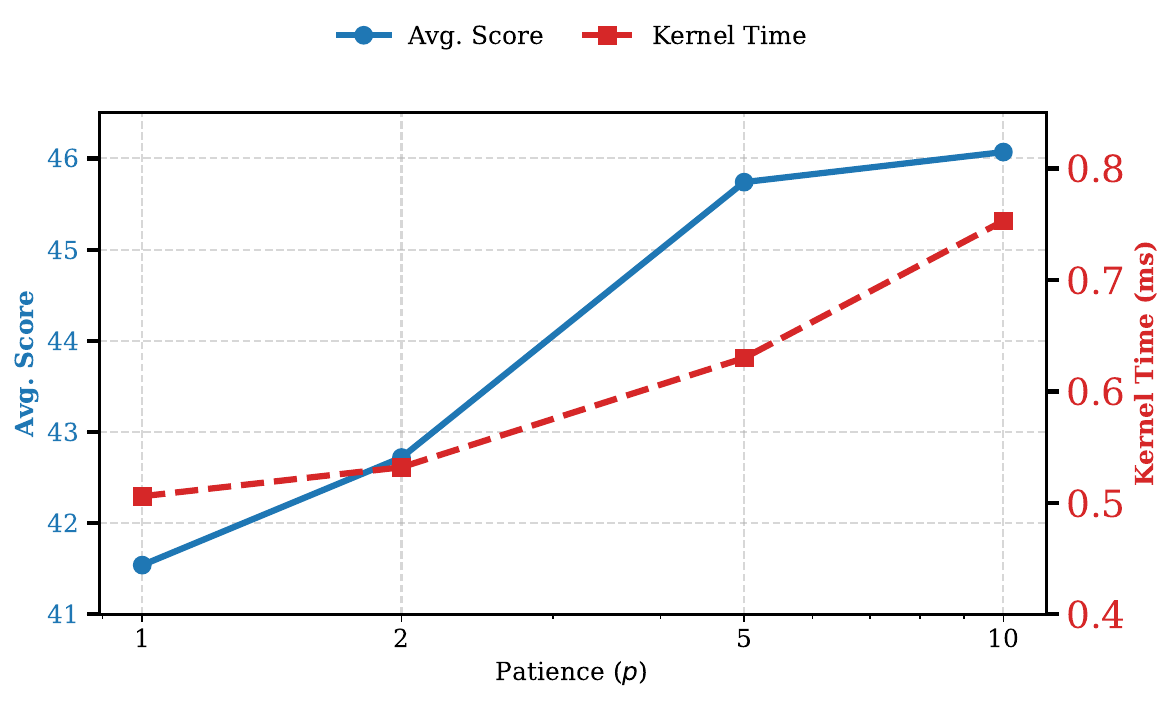}
\caption{Sensitivity analysis of the parameter $p$. The blue solid line (left axis) tracks the LongBench score, while the red dashed line (right axis) shows the kernel execution time.}
\label{fig:sensitivity_p}
\end{figure}

We further investigate the impact of the scale parameter $\tau$ and direction parameter $\phi$ . 
As shown in Figure~\ref{fig:sensitivity_tauphi}, we vary $\tau$ from $10^{-6}$ to $10^{-4}$ and $\phi$ from $10^{-4}$ to $10^{-2}$.
Increasing $\tau$ significantly reduces the kernel execution time, reaching a peak speedup at $10^{-4}$ .
Regarding the quality of the results, the method exhibits strong robustness for smaller thresholds where $\tau \le 10^{-5}$. The average score remains stable around 46.0.
At the most aggressive setting , we observe a moderate performance drop to 44.71.
This creates a flexible trade-off space: users can select $\tau =10^{-5}$ as a balanced operating point to enjoy reduced FA kernel time with negligible accuracy loss, or larger thresholds when speed is the primary priority.

As discussed in our ablation study, $\phi$ primarily serves to preserve correctness. 
The final value of $\phi$ is chosen to capture directional stability under a fixed scale parameter $\tau$. 
When $\tau = 10^{-5}$, the corresponding best-performing value is $\phi = 10^{-3}$. 
A more relaxed, i.e., larger, value of $\phi$ can degrade accuracy, but does not yield a significant efficiency improvement.

We further study the sensitivity of ART to the patience parameter $p$, which controls how many consecutive KV blocks must satisfy the termination criterion before early stopping is triggered. As shown in Figure~\ref{fig:sensitivity_p}, a smaller patience value such as $p=1$ or $p=2$ leads to more aggressive termination and slightly lower kernel time, but it also increases the risk of premature stopping and results in a noticeable drop in average score. Increasing $p$ improves robustness by requiring the attention output to remain stable for multiple consecutive blocks. In particular, $p=5$ achieves the best trade-off between accuracy and efficiency: it recovers most of the performance while retaining a clear kernel-time reduction. Further increasing the patience to $p=10$ only brings marginal accuracy improvement, but substantially weakens the efficiency gain due to later termination. Therefore, we use $p=5$ as the default setting in our experiments, as it provides a balanced operating point between reliable output stability detection and practical acceleration.




\section{Impact of Context Length}
\label{exp:context_length}

To further verify the scalability of our approach, we break down the performance into three context length intervals: 0-4k, 4-8k, and 8k+. As illustrated in Figure \ref{fig:length_breakdown}, the efficiency advantage of ART becomes increasingly pronounced as the input length grows. In short-context scenarios, the runtime difference is marginal; however, in the long-context regime, ART significantly mitigates the computational overhead.

Crucially, this efficiency gain does not compromise long-context capabilities. 
As shown in Figure~\ref{fig:length_breakdown} , the score trajectories of ART-enhanced methods closely track their original counterparts across all length buckets. The overlapping trends indicate that ART robustly preserves important information retrieval abilities even as the context length of the task increases.

\begin{figure*}[h]
  \centering
  \includegraphics[width=\linewidth]{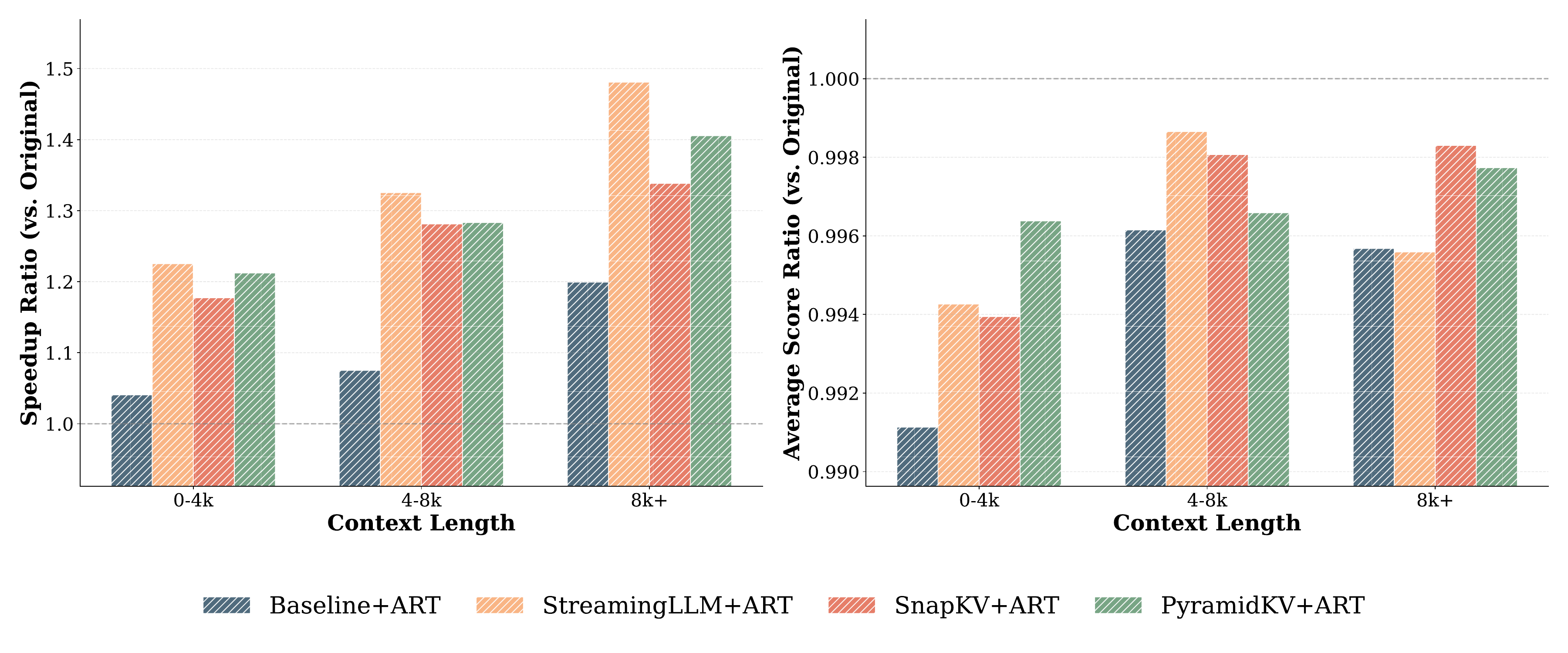} 
  \vspace{-2em}
  \caption{Impact of context length on inference efficiency and performance. The dataset is categorized into three length intervals: 0-4k, 4-8k, and 8k+. The integration of ART significantly reduces kernel running time on long context length with minimal effect on the quality of the results. }
  \label{fig:length_breakdown}
\end{figure*}

\section{Full LongBench Results}
\label{app:longbench_breakdown}

Table~\ref{tab:longbench_scores_appendix} provides the full category-level LongBench breakdown for Mistral-7B-Instruct-v0.3 and Llama-3.1-70B-Instruct. The results are consistent with the average-score summary in the main text: ART introduces only small category-level fluctuations while preserving the overall LongBench score across dense and sparse KV-cache settings.

For the larger-scale evaluation, we deploy Llama-3.1-70B-Instruct on a computing node equipped with four NVIDIA A100 80GB GPUs. This setting evaluates whether ART remains effective under larger model scales and multi-GPU inference. As summarized in Table~\ref{tab:longbench_avg_main}, ART consistently reduces decoding TPOT for Llama-3.1-70B across full KV caching and sparse KV-cache methods, including StreamingLLM, SnapKV, and PyramidKV. The relative speedup is more moderate than that observed on Mistral-7B, which is expected because the overall decoding time composition changes with model scale. As the parameter count increases, the cost of loading weights and computing linear projections, especially in FFN layers, grows substantially. Meanwhile, both Mistral-7B and Llama-3.1-70B use Grouped-Query Attention (GQA), so the KV-cache size does not grow proportionally with the total parameter count. Consequently, attention accounts for a smaller fraction of the total decoding time in the 70B model, limiting the end-to-end speedup from an attention-specific optimization. Nevertheless, ART still reduces the attention-side cost and remains a useful acceleration layer in large-scale long-context inference.

\begin{table*}[t]
\centering
\caption{Category-level LongBench score and decoding TPOT speedup comparison on Mistral-7B-Instruct-v0.3 and Llama-3.1-70B-Instruct.
Each category score is averaged over its corresponding tasks. The \textbf{Avg.} column reports the original LongBench mean score over all 21 tasks, $\Delta$ denotes the marginal score change compared with the corresponding base method.}
\label{tab:longbench_scores_appendix}

\small
\setlength{\tabcolsep}{2.8pt}
\renewcommand{\arraystretch}{0.92}
\begin{tabular}{@{}lccccccccc@{}}
\toprule
\textbf{Method}
& \textbf{Single QA}
& \textbf{Multi QA}
& \textbf{Summ.}
& \textbf{Few-shot}
& \textbf{Synth.}
& \textbf{Code}
& \textbf{Avg.}
& $\boldsymbol{\Delta}$
& \textbf{TPOT speedup} \\
\midrule

\multicolumn{10}{c}{\textbf{Mistral-7B-Instruct-v0.3}} \\
\midrule

\textbf{Baseline}
& 37.88 & 34.65 & 26.06 & 63.43 & 65.63 & 63.56
& \textbf{46.29} & -- & 1.00 $\times$\\

\quad + ART
& 37.20 & 33.96 & 26.40 & 63.04 & 64.85 & 61.80
& \textbf{45.74} & -0.55 & \textbf{1.16$\times$} \\

\midrule

\textbf{StrLLM(0.8)}
& 28.34 & 27.61 & 25.81 & 62.27 & 47.21 & 64.16
& \textbf{40.29} & -- & 1.08$\times$ \\

\quad + ART
& 28.05 & 27.34 & 25.83 & 62.42 & 47.51 & 63.52
& \textbf{40.20} & -0.09 & \textbf{1.24$\times$} \\

\midrule

\textbf{StrLLM(0.2)}
& 20.04 & 19.69 & 22.71 & 56.57 & 18.20 & 61.27
& \textbf{31.11} & -- & 1.18$\times$ \\

\quad + ART
& 19.98 & 19.60 & 22.70 & 56.74 & 18.20 & 61.11
& \textbf{31.09} & -0.02 & \textbf{1.30$\times$} \\

\midrule

\textbf{SnapKV(0.8)}
& 33.13 & 35.56 & 26.33 & 61.36 & 64.67 & 51.33
& \textbf{43.92} & -- & 1.07$\times$ \\

\quad + ART
& 34.05 & 31.76 & 25.39 & 59.62 & 65.11 & 53.45
& \textbf{43.12} & -0.80 & \textbf{1.17$\times$} \\

\midrule

\textbf{SnapKV(0.2)}
& 19.96 & 19.35 & 21.04 & 46.78 & 44.34 & 46.49
& \textbf{31.17} & -- & 1.17$\times$ \\

\quad + ART
& 19.77 & 19.07 & 21.07 & 45.50 & 44.09 & 46.39
& \textbf{30.80} & -0.37 & \textbf{1.27$\times$} \\

\midrule

\textbf{PyramidKV(0.8)}
& 37.02 & 35.08 & 26.04 & 55.48 & 56.25 & 55.37
& \textbf{42.57} & -- & 1.06$\times$ \\

\quad + ART
& 36.28 & 35.02 & 25.92 & 53.96 & 55.21 & 55.77
& \textbf{42.00} & -0.57 & \textbf{1.19$\times$} \\

\midrule

\textbf{PyramidKV(0.2)}
& 23.30 & 30.31 & 22.62 & 47.54 & 12.69 & 49.18
& \textbf{30.07} & -- & 1.17$\times$ \\

\quad + ART
& 23.50 & 29.71 & 22.70 & 47.66 & 12.92 & 48.11
& \textbf{29.97} & -0.10 & \textbf{1.29$\times$} \\

\midrule
\midrule

\multicolumn{10}{c}{\textbf{Llama-3.1-70B-Instruct}} \\
\midrule

\textbf{Baseline}
& 45.47 & 41.48 & 24.36 & 60.17 & 70.31 & 58.20
& \textbf{48.25} & -- & 1.00$\times$ \\

\quad + ART
& 45.99 & 38.42 & 24.41 & 60.27 & 69.53 & 56.05
& \textbf{47.48} & -0.77 & \textbf{1.15$\times$} \\

\midrule

\textbf{StrLLM(0.8)}
& 33.97 & 35.81 & 25.16 & 59.53 & 50.49 & 59.61
& \textbf{42.31} & -- & 1.13$\times$ \\

\quad + ART
& 33.68 & 34.77 & 24.69 & 59.44 & 50.23 & 58.38
& \textbf{41.80} & -0.51 & \textbf{1.27$\times$} \\

\midrule

\textbf{StrLLM(0.2)}
& 20.54 & 32.19 & 23.11 & 58.23 & 16.37 & 63.06
& \textbf{33.88} & -- & 1.52$\times$ \\

\quad + ART
& 20.39 & 31.47 & 24.00 & 56.64 & 16.25 & 62.52
& \textbf{33.51} & -0.37 & \textbf{1.66$\times$} \\

\midrule

\textbf{SnapKV(0.8)}
& 38.83 & 40.48 & 24.41 & 57.10 & 70.41 & 60.93
& \textbf{46.49} & -- & 1.08$\times$ \\

\quad + ART
& 36.36 & 39.77 & 25.72 & 55.83 & 69.19 & 59.09
& \textbf{45.54} & -0.95 & \textbf{1.18$\times$} \\

\midrule

\textbf{SnapKV(0.2)}
& 24.70 & 33.19 & 21.67 & 48.38 & 32.03 & 55.56
& \textbf{34.24} & -- & 1.47$\times$ \\

\quad + ART
& 22.97 & 32.31 & 22.30 & 47.79 & 31.51 & 54.70
& \textbf{33.59} & -0.65 & \textbf{1.58$\times$} \\

\midrule

\textbf{PyramidKV(0.8)}
& 45.27 & 41.93 & 24.87 & 56.45 & 46.76 & 55.80
& \textbf{44.10} & -- & 1.12$\times$ \\

\quad + ART
& 44.42 & 40.58 & 24.50 & 54.20 & 48.06 & 55.27
& \textbf{43.31} & -0.79 & \textbf{1.25$\times$} \\

\midrule

\textbf{PyramidKV(0.2)}
& 28.64 & 44.07 & 22.65 & 49.72 & 28.91 & 43.74
& \textbf{35.93} & -- & 1.47$\times$ \\

\quad + ART
& 27.76 & 43.70 & 22.41 & 47.92 & 30.47 & 43.17
& \textbf{35.47} & -0.46 & \textbf{1.60$\times$} \\

\bottomrule
\end{tabular}
\end{table*}

\section*{Broader Impacts}
ART aims to improve the efficiency of long-context LLM inference by reducing redundant
attention computation and KV-cache accesses. Potential positive impacts include lower
serving cost, reduced energy consumption, and improved accessibility of long-context
applications under limited hardware resources. At the same time, more efficient inference
can lower the cost of deploying large-scale generative systems, which may indirectly
amplify misuse risks already associated with LLMs, such as spam, misinformation, or
automated content generation. This work does not introduce a new generative model,
training dataset, or user-facing deployment system; the released assets are limited to
efficiency-oriented implementation and evaluation code. We encourage users to apply ART
within deployments that follow the safety policies and usage restrictions of the underlying
models.